
\documentclass{article}

\usepackage{microtype}
\usepackage{graphicx}
\usepackage{subfigure}
\usepackage{booktabs} 

\usepackage{hyperref}


\usepackage[accepted]{icml2019_kwang}

\usepackage{hyperref}
\usepackage{url}

\usepackage{dsfont} 
\usepackage{xspace, amsmath, amssymb, amsthm, bbm, bm}
\usepackage{xcolor}

\synctex=1

\usepackage{graphicx}
\usepackage{tabularx}

\usepackage{algorithm}
\usepackage{algorithmic}

\usepackage{multirow}
\usepackage{hhline}

\usepackage{booktabs}

\usepackage{capt-of}

\newtheoremstyle{kjstyle}
  {.3em} 
  {\topsep} 
  {\itshape} 
  {} 
  {\bfseries} 
  {.} 
  {.5em} 
  {} 

\theoremstyle{kjstyle}\newtheorem{lem}{Lemma}
\theoremstyle{kjstyle}\newtheorem{thm}{Theorem}
\theoremstyle{kjstyle}\newtheorem{cor}{Corollary}



\def\hS{\ensuremath{\hat{S}}\xspace}

%
%



\allowdisplaybreaks 



\def\T{\ensuremath{\top}}  
\def\sig{\ensuremath{\sigma}\xspace}

\def\diag{\ensuremath{\mbox{diag}}\xspace}




\def\dt{{\ensuremath{\delta}\xspace} }

\def\bfSigma{\ensuremath{\mathbf{\Sigma}}\xspace}

 
\newcommand{\what}[1]{ {\ensuremath{\widehat{#1}}} }

\def\sig{\ensuremath{\sigma}\xspace}

\usepackage[export]{adjustbox}



\usepackage{pbox} 

\newcommand{\fr}[2]{ { \frac{#1}{#2} }}
\def\lt{\left}
\def\rt{\right}
\def\gam{{\ensuremath{\gamma}\xspace} }

 
\makeatletter
\newcommand{\vast}{\bBigg@{3}}
\newcommand{\Vast}{\bBigg@{4}}
\makeatother


 
\def\cX{\ensuremath{\mathcal{X}}\xspace} 

\def\x{{{\mathbf x}}}
\def\y{{{\mathbf y}}}
\def\z{{{\mathbf z}}}
\def\la{{\langle}}
\def\ra{{\rangle}}

\def\lam{\ensuremath{\lambda}}

\def\U{\ensuremath{\mathbf{U}}\xspace} 
\def\S{\ensuremath{\mathbf{S}}\xspace} 
\def\V{\ensuremath{\mathbf{V}}\xspace}


\usepackage{pifont}
%
%

%
%


\def\u{{{\mathbf u}}}

\def\cA{\ensuremath{\mathcal{A}}\xspace}




\newcommand*{\mathcolor}{}
\def\mathcolor#1#{\mathcoloraux{#1}}
\newcommand*{\mathcoloraux}[3]{%
  \protect\leavevmode
  \begingroup
  \color#1{#2}#3%
  \endgroup
}

\usepackage{enumitem}


\def\Regret{\ensuremath{\text{Regret}}}

 
\newcommand{\wtil}[1]{ {\ensuremath{\widetilde{#1}}} }

\def\X{\ensuremath{\mathbf{X}}} 
\def\I{\ensuremath{\mathbf{I}}} 
\def\til{\tilde} 
\def\tvec{\text{vec}} 
\def\A{\ensuremath{\mathbf{A}}} 
\def\M{\ensuremath{\mathbf{M}}}



\def\cZ{\ensuremath{{\mathcal Z}}\xspace}

\def\hth{\ensuremath{\what{\theta}}\xspace}

\def\hth{{\what{\boldsymbol \theta}}} 

\def\X{\ensuremath{\mathbf{X}}\xspace}
\def\U{\ensuremath{\mathbf{U}}\xspace}
\def\x{\ensuremath{\mathbf{x}}\xspace}

\def\lam{{\ensuremath{\lambda}\xspace} }

\def\bfTh{{{\boldsymbol \Theta}}}

\def\v{\ensuremath{\mathbf{v}}}
\def\a{\ensuremath{\mathbf{a}}}

\def\vec{\ensuremath{\text{vec}}}
\def\Th{{{\boldsymbol \Theta}}} 
\def\K{\ensuremath{\mathbf{K}}\xspace} 
\def\Z{\ensuremath{\mathbf{Z}}} 
\def\Lam{{{\boldsymbol{{\Lambda}}}}}
\def\bflam{{{\boldsymbol{{\lambda}}}}}

\def\hU{\ensuremath{\what{\U}}\xspace} 
\def\hV{\ensuremath{\what{\V}}\xspace} 
\def\hS{\ensuremath{\what{\S}}\xspace} 
\def\hTh{\ensuremath{\what{\Th}}\xspace}

\def\R{\ensuremath{\mathbf{R}}} 
 
\def\E{\ensuremath{\mathbf{E}}} 
\def\hUp{\ensuremath{\what{\U}_\perp}}

\def\hK{\ensuremath{\what{\K}}}


\def\cO{{\ensuremath{\mathcal{O}}}} 
\def\tilO{{\ensuremath{\widetilde{\mathcal{O}}}}}

\def\Lam{{{\boldsymbol{{\Lambda}}}}}

\def\tr{{\ensuremath{\normalfont{\text{tr}}}}}



\def\th{{\ensuremath{\boldsymbol{\theta}}}}


\def\lamp{\ensuremath{\lambda_\perp}}
\usepackage{mdframed}
\usepackage{lipsum}
\definecolor{kjgray}{rgb}{.7,.7,.7}

\makeatletter
\renewcommand{\paragraph}{%
  \@startsection{paragraph}{4}%
  {\z@}{0.50ex \@plus 1ex \@minus .2ex}{-1em}%
  {\normalfont\normalsize\bfseries}%
}
\makeatother
\def\hVp{\ensuremath{\what{\V}_\perp}}

\def\hbS{\ensuremath{\what{\S}}}




\DeclareMathOperator{\EE}{\mathds{E}}

\def\RR{\ensuremath{\mathds{R}}}

\usepackage{todonotes}

\def\tilK{\ensuremath{\widetilde{\K}}}



\def\ddefloop#1{\ifx\ddefloop#1\else\ddef{#1}\expandafter\ddefloop\fi}
\def\ddef#1{\expandafter\def\csname c#1\endcsname{\ensuremath{\mathcal{#1}}}}
\ddefloop ABCDEFGHIJKLMNOPQRSTUVWXYZ\ddefloop
\def\ddef#1{\expandafter\def\csname #1#1\endcsname{\ensuremath{\mathbb{#1}}}}
\ddefloop ABCDEFGHIJKLMNOPQRSTUVWXYZ\ddefloop

\let\EE\undefined

\DeclareMathOperator{\EE}{\mathbb{E}}

\RequirePackage[OT1]{fontenc}
%
\usepackage[export]{adjustbox}

\renewcommand{\cite}{\citep}

\usepackage{booktabs}       
\usepackage{amsfonts}       
\usepackage{nicefrac}       
\usepackage{microtype}      

\usepackage{enumitem}
\setlist[itemize]{topsep=1pt,itemsep=0pt,parsep=2pt}

\usepackage{amsthm}
\usepackage{mathtools}
\usepackage{amsmath}
\usepackage{bbm}
\usepackage{amsfonts}
\usepackage{amssymb}
\let\vec\undefined
\usepackage{MnSymbol} 

\usepackage{etoolbox}
\newcommand{\zerodisplayskips}{%
  \setlength{\abovedisplayskip}{3pt}%
  \setlength{\belowdisplayskip}{4pt}%
  \setlength{\abovedisplayshortskip}{3pt}%
  \setlength{\belowdisplayshortskip}{3pt}}
\appto{\normalsize}{\zerodisplayskips}
\appto{\small}{\zerodisplayskips}
\appto{\footnotesize}{\zerodisplayskips}

\def\v{\ensuremath{\mathbf{v}}}
\def\u{\ensuremath{\mathbf{u}}}
\def\vec{\ensuremath{\text{vec}}}


\usepackage{anyfontsize}

\usepackage{titlesec}
\titlespacing*{\section}
{0pt}{.8ex}{.4ex}
\titlespacing*{\subsection}
{0pt}{.6ex}{.3ex}


\icmltitlerunning{Bilinear Bandits with Low-rank Structure}

\begin{document}

\twocolumn[
\icmltitle{Bilinear Bandits with Low-rank Structure}



\icmlsetsymbol{equal}{*}

\begin{icmlauthorlist}
\icmlauthor{Kwang-Sung Jun}{to}
\icmlauthor{Rebecca Willett}{goo}
\icmlauthor{Stephen Wright}{ed}
\icmlauthor{Robert Nowak}{ed}
\end{icmlauthorlist}

\icmlaffiliation{to}{Boston University}
\icmlaffiliation{goo}{University of Chicago}
\icmlaffiliation{ed}{University of Wisconsin-Madison}

\icmlcorrespondingauthor{Kwang-Sung Jun}{kwangsungjun@gmail.com}

\icmlkeywords{Multi-armed bandits, Linear bandits, Stochastic bandits, Low-rank structure}

\vskip 0.3in
]



\printAffiliationsAndNotice{}  

\begin{abstract}
  We introduce the bilinear bandit problem with low-rank structure in which an action takes the form of a pair of arms from two different entity types, and the reward is a bilinear function of the known feature vectors of the arms.
  The unknown in the problem is a $d_1$ by $d_2$ matrix $\mathbf{\Theta}^*$ that defines the reward, and has low rank $r \ll \min\{d_1,d_2\}$.
  Determination of $\mathbf{\Theta}^*$ with this low-rank structure poses a significant challenge in finding the right exploration-exploitation tradeoff.
  In this work, we propose a new two-stage algorithm called ``Explore-Subspace-Then-Refine'' (ESTR). The first stage is an explicit subspace exploration, while the second stage is a linear bandit algorithm called ``almost-low-dimensional OFUL'' (LowOFUL) that exploits and further refines the estimated subspace via a regularization technique.
  We show that the regret of ESTR is $\widetilde{\mathcal{O}}((d_1+d_2)^{3/2} \sqrt{r T})$ where $\widetilde{\mathcal{O}}$ hides logarithmic factors and $T$ is the time horizon, which improves upon the regret of $\widetilde{\mathcal{O}}(d_1d_2\sqrt{T})$ attained for a na\"ive linear bandit reduction.
  We conjecture that the regret bound of ESTR is unimprovable up to polylogarithmic factors, and our preliminary experiment shows that ESTR outperforms a na\"ive linear bandit reduction.
  \vspace{-1.5em}
\end{abstract}

\section{Introduction}


Consider a drug discovery application where scientists would like to choose a (drug, protein) pair and measure whether the pair exhibits the desired interaction~\cite{luo17anetwork}.
Over many repetitions of this step, one would like to maximize the number of discovered pairs with the desired interaction. 
Similarly, an online dating service may want to choose a (female, male) pair from the user pool, match them, and receive feedback about whether they like each other or not.
For clothing websites, the recommendation system may want to choose a pair of items (top, bottom) for a customer, whose appeal depends in part on whether they match.
In these applications, the two types of entities are recommended and evaluated as a unit.  
Having feature vectors of the entities available,\footnote{
  The feature vectors can be obtained either directly from the entity description (for example, hobbies or age) or by other preprocessing techniques (for example, embedding).
} the system must explore and learn what features of the two entities \emph{jointly} predict positive feedback
in order to make effective recommendations.


The recommendation system aims to obtain large rewards (the amount of
positive feedback) but does not know ahead of time the relationship
between the features and the feedback.  The system thus faces two
conflicting goals: choosing pairs that $(i)$ maximally help estimate
the relationship (``exploration'') but which may give small rewards
and $(ii)$ return relatively large, but possibly suboptimal, rewards
(``exploitation''), given the limited information obtained from the
feedback collected so far.  Such an exploration-exploitation dilemma
can be formulated as a multi-armed bandit
problem~\cite{robbins85asymptotically,auer02finite}.  When the feature
vectors are available for each arm, one can postulate simple reward
structures such as (generalized) linear models to allow a large or
even infinite number of arms~\cite{auer02using, dani08stochastic,
  ay11improved, filippi10parametric}, a paradigm that has received
much attention during the past decade, with such applications as
online news recommendations~\cite{li10acontextual}. Less is known for
the situation we consider here, in which the recommendation (action)
involves two different entity types and forms a bilinear structure.
The closest work we are aware of is \citet{kveton17stochastic_arxiv}
whose action structure is the same as ours but without arm feature vectors.
Factored bandits~\cite{zimmert18factored} provide a more general view with $L$ entity types rather than two, but they do not utilize arm features nor the low-rank structure.
Our problem is different from dueling bandits~\cite{yue12thekarmed} or
bandits with unknown user segment~\cite{bhargava17active}, which
choose two arms from \textit{the same} entity set rather than from two
\textit{different} entity types.  Section~\ref{sec:related} below
contains detailed comparisons to related work.

This paper introduces the bilinear bandit problem with low-rank
structure.  In each round $t$, an algorithm chooses a left arm $\x_t$
from $\cX \subseteq \RR^{d_1}$ and a right arm $\z_t$ from
$\cZ\subseteq \RR^{d_2}$, and observes a noisy reward of a bilinear
form:
\begin{align} \label{eq:noise-model}
  y_t = \x_t^\T \bfTh^*\z_t + \eta_t \;,
\end{align}
where $\bfTh^*\in\RR^{d_1 \times d_2}$ is an unknown parameter and
$\eta_t$ is a $\sig$-sub-Gaussian random variable conditioning on $\x_t$, $\z_t$, and all
the observations before (and excluding) time $t$.
Denoting by $r$ the rank of
$\bfTh^*$, we assume that $r$ is small ($r \ll \min\{d_1,d_2\}$),
which means that the reward is governed by a few factors. 
Such low-rank appears in many recommendation applications~\cite{ma08sorec}.
Our choice of reward model is popular and arguably natural; for example, the same model was used in \citet{luo17anetwork} for drug discovery.

The goal is to maximize the cumulative reward up to time $T$.
Equivalently, we aim to minimize the cumulative regret:\footnote{This
  regret definition is actually called \textit{pseudo} regret; we
  refer to~\citet[Section~1]{bubeck12regret} for detail.}
\begin{align}
\Regret_T = \sum_{t=1}^T \left\{ \max_{\x\in\cX, \z\in\cZ} \x^\T \bfTh^* \z - \x_t^\T \bfTh^* \z_t \right\}~.
\end{align}
A naive approach to this problem is to reduce the bilinear problem to
a linear problem, as follows:
\begin{align}\label{eq:oful-reduction}
\x^\T \bfTh^* \z = \la \tvec(\x\z^\T), \tvec(\bfTh^*) \ra  \;.
\end{align}
Throughout the paper, we focus on the regime in which the numbers of
possible actions $N_1 := |\cX| \in \NN_+ \cup \{\infty\}$ and $N_2 :=
|\cZ| \in \NN_+ \cup \{\infty\}$ are much larger than dimensions $d_1$
and $d_2$, respectively.\footnote{Otherwise, one can reduce the
  problem to the standard $K$-armed bandit problem and enjoy regret of
  $\tilO(\sqrt{N_1 N_2 T})$. With SupLinRel~\cite{auer02using}, one
  may also achieve $\tilO(\sqrt{d_1d_2 T \log (N_1N_2)})$, but this
  approach wastes a lot of samples and does not allow an infinite
  number of arms.}  The reduction above allows us to use the standard
linear bandit algorithms (see, for example,~\cite{ay11improved}) in
the $d_1 d_2$-dimensional space and achieve regret of $\tilO(d_1
d_2\sqrt{T})$, where $\tilO$ hides logarithmic factors. However,
$d_1d_2$ can be large, making this regret bound take an undesirably
large value. Moreover, the regret does not decrease as $r$ gets
smaller, since the reduction hinders us from exploiting the low-rank
structure.

We address the following challenge: Can we design an algorithm for the
bilinear bandit problem that exploits the low-rank structure and
enjoys regret strictly smaller  than $\tilO(d_1 d_2\sqrt{T})$?  We answer the
question in the affirmative by proposing {\em Explore Subspace Then
  Refine} (ESTR), an approach that achieves a regret bound of
$\tilO((d_1 + d_2)^{3/2}\sqrt{rT})$.  ESTR consists of two stages.  In
the first stage, we estimate the row and column subspace by randomly
sampling from a subset of arms, chosen carefully.  In the second stage, we leverage the estimated
subspace by invoking an approach  called {\em
  almost-low-dimensional OFUL} (LowOFUL), a variant of
OFUL~\cite{ay11improved} that uses regularization to penalize the
subspaces that are apparently {\em not} spanned by the rows and
columns (respectively) of $\Th^*$.
We conjecture that our regret upper bound is minimax optimal up to polylogarithmic factors based on the fact that the bilinear model has a much lower expected signal strength than the linear model.
We provide a detailed argument on the lower bound in Section~\ref{sec:lowerbound}.

While the idea of having an explicit exploration stage, so-called Explore-Then-Commit (ETC), is not new, the way we exploit
the subspace with LowOFUL is novel for two reasons. 
First, the standard ETC commits to the estimated parameter without refining and is thus known to have $\cO(\sqrt{T})$ regret only for ``smooth'' arm sets such as the unit ball~\cite{rusmevichientong10linearly,ay09forced}.
This means that the estimate refining is necessary for generic arm sets.
Second, after the first stage that
outputs a subspace estimate, it is tempting to project all the arms
onto the identified subspaces ($r$ dimensions for each row and column
space), and naively invoke OFUL in the $r^2$-dimensional space.
However, the subspace mismatch invalidates the upper confidence bound
used in OFUL; i.e., the confidence bound does not actually bound the mean reward.

Attempts to correct the confidence bound so that it is faithful are
not trivial, and we are unaware of a solution that leads to improved
regret bounds.  Departing from completely committing to the identified
subspaces, LowOFUL works with the full $d_1 d_2$-dimensional space,
but penalizes the subspace that is \emph{complementary to the
  estimated subspace}, thus continuing to \textit{refine} the
subspace.  We calibrate the amount of regularization to be a function
of the subspace estimation error; this is the key to achieving our
final regret bound.

We remark that our bandit problem can be modified slightly for the
setting in which the arm $\z_t$ is considered as a context, obtained
from the environment. This situation arises, for example, in
recommendation systems where $\cZ$ is the set of users represented by
indicator vectors (i.e., $d_2=N_2$) and $\cX$ is the set of items.
Such a setting is similar to~\citet{cb13agang}, but we assume that
$\Th^*$ is low-rank rather than knowing the graph information.
Furthermore, when the user information is available, one can take
$\cZ$ as the set of user feature vectors.

The paper is structured as follows.  In Section~\ref{sec:prelim}, we
define the problem formally and provide a sketch of the main
contribution.  Sections~\ref{sec:stage1} and~\ref{sec:stage2} describe
the details of stages 1 and 2 of ESTR, respectively.  We elaborate our
conjecture on the regret lower bound in Section~\ref{sec:lowerbound}.
After presenting our preliminary experimental results in
Section~\ref{sec:expr}, we discuss related work in
Section~\ref{sec:related} and propose future research directions in
Section~\ref{sec:conclusion}.

\begin{figure*}
  \fbox{
    \parbox[c]{.975\textwidth}{
      \textbf{Input}: time horizon $T$, the exploration length $T_1$, the rank $r$ of $\Th^*$, and the spectral bounds $S_F$, $S_2$, and $S_r$ of $\Th^*$.
      \vspace{.5em}\\%
      \textbf{Stage 1} (Section~\ref{sec:stage1})
      \begin{itemize}
        \item Solve (approximately)
        \begin{align} \label{eq:stage1-optim}
        \arg \max_{\text{distinct }\x^{(1)}, \ldots, \x^{(d_1)} \in \cX} \quad \lt(\text{the smallest eigenvalue of }  \lt[\x^{(1)}, \ldots, \x^{(d_1)}\rt] \rt)
        \end{align}
        and define  $\X = \{\x^{(1)},\cdots,\x^{(d_1)}\}$.
        Define $\Z$ similarly.
        \item For $T_1$ rounds, choose a pair of arms from $\X \times
        \Z$, pulling each pair the same number of times to the extent possible.
        That is, choose each pair $\lfloor \fr{T_1}{d_1 d_2}\rfloor$ times,
        then choose $T_1 - d_1 d_2 \lfloor \fr{T_1}{d_1 d_2}\rfloor$
        pairs uniformly at random without replacement.
        \item Let $\wtil\K$ be a matrix such that $\wtil K_{ij}$ is the average reward of pulling the arm $(\x^{(i)},\z^{(j)})$.
        Invoke a noisy matrix recovery algorithm (e.g., OptSpace~\cite{keshavan10matrix}) with $\wtil\K$ and the rank $r$ to obtain an estimate $\hK$.
        \item Let $\hTh = \X^{-1} \hK (\Z^\T)^{-1} $ where $\X=[(\x^{(1)})^\T;\  \cdots\ ; (\x^{(d_1)})^\T]\in\RR^{d_1\times d_1}$ (abusing notation) and $\Z$ is defined similarly.
        \item Let $\hTh = \hU \hS \hV^\T$ be the SVD of $\hTh$. Let $\hUp$ and $\hVp$ be orthonormal bases of the complementary subspaces of $\hU$ and $\hV$, respectively.
        \item Let $\gam(T_1)$ be the subspace angle error bound such that, with high probability,
        \begin{align}\label{eq:stage1-angle}
        \|\hUp^\T \U^* \|_F \|\hVp^\T \V^*\|_F \le \gam(T_1)
        \end{align}
        where $\Th^* = \U^* \S^* \V^{*\T}$ is the SVD of $\Th^*$.
      \end{itemize}
      \vspace{.5em}
      \textbf{Stage 2} (Section~\ref{sec:stage2})
      \begin{itemize}
        \item Rotate the arm sets: $\cX' = \left\{[\hU \hUp]^\T \x: \x \in \cX\right\}$ and $\cZ' = \left\{[\hV \hVp]^\T\z: \z \in \cZ\right\}$.
        \item Define a vectorized arm set so that the last $(d_1-r)\cdot(d_2-r)$ components are from the complementary subspaces:
        $$\cA = \left\{ [ \vec(\x_{1:r}\z_{1:r}^\T); \vec(\x_{r+1:d_1}\z_{1:r}^\T); \vec(\x_{1:r}\z_{r+1:d_2}^\T); \vec(\x_{r+1:d_1}\z_{r+1:d_2}^\T) ] \in \RR^{d_1d_2}: \x \in \cX', \, \z \in \cZ'  \right\}\ .$$
        \item For $T_2 = T-T_1$ rounds, invoke LowOFUL with the arm set $\cA$, the low dimension $k = (d_1 + d_2)r - r^2$, and $\gam(T_1)$. 
      \end{itemize}
  }}
  \vspace{-.5em}
  \caption{A sketch of Explore Subspace Then Refine (ESTR)}
  \label{fig:estr}
  \vspace{-1em}
\end{figure*}

\section{Preliminaries}
\label{sec:prelim}

We define the problem formally as follows.  Let
$\cX\subseteq\RR^{d_1}$ and $\cZ\subseteq \RR^{d_2}$ be the left and
right arm space, respectively.  Define $N_1 = |\cX|$ and $N_2 =
|\cZ|$. (Either or both can be infinite.)  We assume that both the
left and right arms have Euclidean norm at most 1: $\|\x\|_2 \le 1$
and $\|\z\|_2 \le 1$ for all $\x\in\cX$ and $\z\in\cZ$.  Without loss
of generality, we assume $\cX$ ($\cZ$) spans the whole $d_1$ ($d_2$)
dimensional space (respectively) since, if not, one can project the arm set to a lower-dimensional space that is now fully spanned.\footnote{
  In this case, we effectively work with a projected version of $\Th^*$, and its rank may become smaller as well.
}
We assume $d_2 = \Theta(d_1)$ and
define $d = \max\{d_1,d_2\}$.  If $A$ is a positive integer, we use
notation $[A] = \{1,2,\ldots,A\}$.  
We denote by $\v_{i:j}$ the
$(j-i+1)$-dimensional vector taking values from the coordinates from
$i$ to $j$ from $\v$.  Similarly, we define $\M_{i:j,k:\ell} \in
\RR^{(j-i+1) \times (\ell-k+1)}$ to be a submatrix taking values from
$\M$ with the row indices from $i$ to $j$ and the column indices from
$k$ to $\ell$. 
We denote by $v_i$ the $i$-th component of the vector $\v$ and by $M_{ij}$ the entry of a matrix $\M$ located at the $i$-th row and $j$-th column.
Denote by $\Sigma_k(\M)$ the $k$-th largest singular value, and define $\Sigma_{\max}(\M) = \Sigma_1(\M)$.
Let $\Sigma_{\min}(\M)$ be the smallest nonzero singular value of $\M$.
$|\M|$ denotes the determinant of a matrix $\M$.

The protocol of the bilinear bandit problem is as follows.  At time
$t$, the algorithm chooses a pair of arms $(\x_t,\z_t) \in \cX \times
\cZ$ and receives a noisy reward $y_t$ according to
\eqref{eq:noise-model}.  We make the standard assumptions in linear
bandits: the Frobenius and operator norms of $\Th^*$ are bounded by
known constants, $\|\Th^*\|_F \le S_F$ and $\|\Th^*\|_2 \le
S_2$,\footnote{
  When $S_2$ is not known, one can set $S_2 = S_F$. In
  some applications, $S_2$ is known. 
  For example, the binary model $y_t \sim \text{Bernoulli}((\x_t^\T\Th^* \z_t)+1)/2)$, we can evidently set $S_2=1$.
}  
and the sub-Gaussian scale $\sig$ of $\eta_t$ is known to the algorithm.  We
denote by $s^*_i$ the $i$-th largest singular value of $\Th^*$.  We
assume that the rank $r$ of the matrix is known and that $s^*_r \ge
S_r$ for some known $S_r>0$.~\footnote{In practice, one can perform
  rank estimation after the first stage
  (see, for example,~\citet{keshavan10matrix}).}

The main contribution of this paper is the first nontrivial upper
bound on the achievable regret for the bilinear bandit problem.  In
this section, we provide a sketch of the overall result and the key
insight.  For simplicity, we omit constants and variables other
than $d$, $r$, and $T$.  Our proposed ESTR algorithm enjoys the
following regret bound, which strictly improves the naive linear
bandit reduction when $r \ll d$.
\begin{thm}[An informal version of Corollary~\ref{cor:estr-regret}]~\label{thm:informal}
  Under mild assumptions, the regret of ESTR is
  $\tilO(d^{3/2}\sqrt{rT})$ with high probability.
\end{thm}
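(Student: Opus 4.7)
The plan is to split the total regret as $R_T = R_{T_1} + R_{T_2}$ along the two stages and then optimize $T_1$ at the end. Because per-round regret is at most $2S_2$ (from $\|\x\|,\|\z\|\le 1$ and $\|\Th^*\|_2\le S_2$), Stage 1 contributes $R_{T_1} \le 2S_2 T_1$ trivially; the work is in bounding $R_{T_2}$ and tracking how the Stage-1 subspace error $\gam(T_1)$ propagates. For $\gam(T_1)$ itself I would note that, with each of the $d_1 d_2$ pairs pulled roughly $T_1/(d_1 d_2)$ times, $\wtil\K$ is an unbiased estimator of $\K := \X\Th^*\Z^\T$ with per-entry noise variance $O(\sig^2 d_1 d_2 / T_1)$. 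The design in~\eqref{eq:stage1-optim} keeps $\Sigma_{\min}(\X)$ and $\Sigma_{\min}(\Z)$ bounded away from zero, so $\Sigma_{\min}(\K) \gtrsim S_r\,\Sigma_{\min}(\X)\Sigma_{\min}(\Z)$, and a noisy matrix-recovery guarantee (OptSpace) combined with Wedin's $\sin\Theta$ theorem yields~\eqref{eq:stage1-angle} with $\gam(T_1)$ decaying polynomially in $1/T_1$.

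For Stage 2 the decisive structural observation is that in the rotated coordinates $[\hU\ \hUp]$, $[\hV\ \hVp]$ the block $\hUp^\T \Th^* \hVp$ is exactly what lands in the trailing $(d_1-r)(d_2-r)$ coordinates of the vectorized action. Expanding $\Th^* = \sum_{i=1}^r s_i^* u_i^* v_i^{*\T}$ and using $s_i^* \le S_2$,
\begin{align*}
\|\hUp^\T \Th^* \hVp\|_F
&\le S_2 \sum_{i=1}^r \|\hUp^\T u_i^*\|\,\|\hVp^\T v_i^*\|\\
&\le S_2\,\|\hUp^\T \U^*\|_F\,\|\hVp^\T \V^*\|_F \le S_2\,\gam(T_1),
\end{align*}
so the vectorized target $\vec(\Th^*)$ is \emph{almost} $k$-dimensional with $k = (d_1+d_2)r - r^2$, its leakage into the remaining $d_1 d_2 - k$ coordinates bounded by $S_2 \gam(T_1)$. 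Invoking the LowOFUL regret bound (Section~\ref{sec:stage2}) with regularization tuned to $S_2\gam(T_1)$ then yields $R_{T_2} = \tilO(\sqrt{k T_2}) + \tilO(\gam(T_1) T_2 \sqrt{k})$ instead of the $\tilO(d_1 d_2 \sqrt{T})$ of vanilla OFUL. Balancing the three dominant terms—$T_1 S_2$, $\sqrt{k T_2}$, and $\gam(T_1) T_2 \sqrt{k}$—with $k = O(dr)$ and the $1/T_1$-decay of $\gam(T_1)$ picks out $T_1 = \Theta(d^{3/2}\sqrt{rT})$ and gives total regret $\tilO(d^{3/2}\sqrt{rT})$.

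The main obstacle is the Stage-2 regret analysis. Naively, OFUL's confidence ellipsoid shrinks at a rate tied to the full ambient dimension $d_1 d_2$, obliterating any low-rank gain, so the technical work is to show that LowOFUL's asymmetric regularizer—heavy on the trailing $d_1 d_2 - k$ coordinates and light on the leading $k$—produces an elliptical potential whose cumulative $\log\det$ scales with $k$ rather than $d_1 d_2$, while the bias from penalizing the small-but-nonzero true coordinates contributes only an additive $\tilO(\gam(T_1) T_2 \sqrt{k})$. A secondary subtlety is keeping the Wedin bound in the multiplicative form $\|\hUp^\T \U^*\|_F \|\hVp^\T \V^*\|_F$ (rather than bounding each factor separately) throughout the OptSpace analysis, so that the leakage estimate above genuinely scales with $\gam(T_1)$ and not merely with $\sqrt{\gam(T_1)}$; this tight multiplicative form is what ultimately earns the $(d_1+d_2)^{3/2}\sqrt{rT}$ rate rather than something weaker.
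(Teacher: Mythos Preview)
Your overall architecture matches the paper exactly: split the regret by stages, bound Stage~1 trivially by $2S_2 T_1$, combine OptSpace with Wedin's $\sin\Theta$ theorem for the subspace guarantee~\eqref{eq:stage1-angle}, observe that the trailing $(d_1-r)(d_2-r)$ block satisfies $\|\hUp^\T\Th^*\hVp\|_F\le S_2\gam(T_1)$ (your argument for this is correct and equivalent to~\eqref{eq:thstar-tail-bound}), feed this into LowOFUL as $B_\perp$, and optimize $T_1$. The remark about needing the \emph{product} $\|\hUp^\T\U^*\|_F\|\hVp^\T\V^*\|_F$ rather than the individual factors is also exactly the point of Theorem~\ref{thm:sintheta-ours}.

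The gap is in the form of the LowOFUL regret and hence in the balancing. You write $R_{T_2}=\tilO(\sqrt{kT_2})+\tilO(\gam(T_1)T_2\sqrt{k})$, but Corollary~\ref{cor:lowoful-regret} (with the specific choice $\lamp=T_2/(k\log(1+T_2/\lam))$, which is what makes the $\log\det$ potential $O(k)$) gives $\tilO\bigl((\sigma k+\sqrt{k\lam}B+\sqrt{T_2}\,B_\perp)\sqrt{T_2}\bigr)$: the leading term is $k\sqrt{T_2}$, not $\sqrt{kT_2}$, and the bias term is $B_\perp T_2=\gam(T_1)T_2$ with \emph{no} extra $\sqrt{k}$. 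Your extra $\sqrt{k}$ in the bias term breaks the arithmetic you assert: with the actual prefactor $\gam(T_1)=\Theta(d^3 r/T_1)$ from Theorem~\ref{thm:sintheta-ours} (which you never pin down), your bias term becomes $d^{7/2}r^{3/2}T/T_1$, and balancing against $T_1$ yields $T_1=\Theta(d^{7/4}r^{3/4}\sqrt{T})$ and regret $\tilO(d^{7/4}r^{3/4}\sqrt{T})$, strictly worse than claimed. The correct balancing is between $T_1$, $k\sqrt{T}=dr\sqrt{T}$, and $\gam(T_1)T=d^3 r T/T_1$; the first and third give $T_1=\Theta(d^{3/2}\sqrt{rT})$, and then $dr\sqrt{T}\le d^{3/2}\sqrt{rT}$ since $r\le d$.
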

We conjecture that the regret bound above is minimax optimal up to polylogarithmic factors since the expected signal strength in the bilinear model is much weaker than the linear model.
We elaborate on this argument in Section~\ref{sec:lowerbound}.

We describe ESTR in Figure~\ref{fig:estr}.  The algorithm proceeds in
two stages.  In the first stage, we estimate the column and row
subspace of $\Th^*$ from noisy rank-one measurements using a matrix
recovery algorithm.
Specifically, we first identify $d_1$ and $d_2$ arms from the set
$\cX$ and $\cZ$ in such a way that the smallest singular values of the
matrices formed from these arms are maximized approximately (see~\eqref{eq:stage1-optim}), which is a form of submatrix selection problem (details in Section~\ref{sec:stage1}).
We emphasize that finding the exact solution is not necessary here since Theorem~\ref{thm:informal} has a mild dependency on the smallest eigenvalue found when approximating~\eqref{eq:stage1-optim}.
We then use the popular matrix recovery algorithm,
OptSpace~\cite{keshavan10matrix} to estimate $\Th^*$.  The $\sin
\Theta$ theorem of Wedin~\cite{stewart90matrix} is used to convert the
matrix recovery error bound from OptSpace to the desired subspace
angle guarantee~\eqref{eq:stage1-angle} with $ \gam(T_1) = \cO\lt(\fr{d^3
  r}{T_1}\rt) $.  The regret incurred in stage 1 is bounded trivially
by $T_1 \|\Th^*\|_2$.

In the second stage, we transform the problem into a $d_1
d_2$-dimensional linear bandit problem and invoke LowOFUL that we introduce in Section~\ref{sec:stage2}.
This technique projects the arms onto both the estimated subspace and its
complementary subspace and uses $\gam(T_1)$ to penalize weights in the
complementary subspaces $\hUp$ and $\hVp$.  LowOFUL enjoys regret
bound $\tilO( (dr + \sqrt{T} \gam(T_1)) \sqrt{T-T_1} )$ during $T - T_1$
rounds.  By combining with the regret for the first stage, we obtain
an overall regret of
\begin{align*}
T_1 + \lt(dr + \sqrt{T} \fr{d^3r}{T_1}\rt) \sqrt{T}.
\end{align*}
Choosing $T_1$ to minimize this expression, we obtain a regret bound
of $\tilO( d^{3/2} \sqrt{rT} )$.

\section{Stage 1: Subspace estimation}
\label{sec:stage1}

The goal of stage 1 is to estimate the row and column subspaces for
the true parameter $\Th^*$.  How should we choose which arm pairs to
pull, and what guarantee can we obtain on the subspace estimation
error?
One could choose to apply a noisy matrix recovery algorithm with
affine rank minimization~\cite{recht10guaranteed,mohan10new} to the
measurements attained from the arm pulls.
However, these methods require the measurements to be Gaussian or
Rademacher, so their guarantees depend on satisfaction of a RIP
property~\cite{recht10guaranteed}, or, for rank-one projection measurements, an RUB property~\cite{cai15rop}.
Such assumptions are not suitable
for our setting since measurements are restricted to the arbitrarily given arm
sets $\cX$ and $\cZ$.  Uniform sampling from the arm set cannot
guarantee RIP, as the arm set itself can be heavily biased in certain
directions.

We design a simple reduction procedure though matrix recovery with
noisy entry observations, leaving a more sophisticated treatment as future work.
The $d_1$ arms in $\cX$ are chosen according to the criterion
\eqref{eq:stage1-optim}, which is a combinatorial problem that is hard
to solve exactly. Our analysis does not require its exact solution,
however; it is enough that the objective value is nonzero (that
is, the matrix $\X$ constructed from these $d_1$ arms is
nonsingular). (Similar comments hold for the matrix $\Z$.)  
We remark that the problem~\eqref{eq:stage1-optim} is shown to be NP-hard by~\citet{civril09selecting} and is related to finding submatrices with favorable spectral properties~\cite{civril07finding,tropp09column}, but a thorough review on algorithms and their limits is beyond the scope of the paper.
For our experiments, simple methods such as random selection were sufficient; we describe our implementation in the supplementary material.

If $\K^*$ is the matrix defined by $K^*_{ij} = \x^{(i) \T} \Th^* \z^{(j)}$, each
time step of stage 1 obtains a noisy estimate of one element of $\K^*$.
Since multiple measurements of each entry are made, in general, we
compute average measurements for each entry.  A matrix recovery
algorithm applied to this matrix of average measurements yields the
estimate $\what\K$ of the rank-$r$ matrix $\K^*$.  Since $\K^* = \X
\Th^* \Z^\T$, we estimate $\Th^*$ by $\hTh = \X^{-1} \what\K
(\Z^\T)^{-1}$ and then compute the subspace estimate $\hU\in\RR^{d_1
  \times r}$ and $\hV\in\RR^{d_2 \times r}$ by applying SVD to
$\hTh$.

We choose the recovery algorithm OptSpace by~\citet{keshavan10matrix}
because of its strong (near-optimal) guarantee.
Denoting the SVD of $\K^*$ by $\U \R \V^\T$, we use the matrix
incoherence definition from~\citet{keshavan10matrix} and let
$(\mu_0,\mu_1)$ be the smallest values such that for all $i \in [d_1], j\in[d_2]$,
\begin{align*}
  \sum_{k=1}^r U_{ik}^2 \le \mu_0 r/d_1 , \quad  \sum_{k=1}^r V_{jk}^2 \le \mu_0 r /d_2, \quad \text{ and }
  \\ \lt| \sum_{k=1}^r U_{ik} (\Sigma_{k}(\K^*) / \Sigma_{\max}(\K^*)) V_{jk} \rt| \le \mu_1 \sqrt{\fr{r}{d_1d_2}} \ .
\end{align*}
Define the condition number $\kappa = \Sigma_{\max}(\K^*) /
\Sigma_{\min}(\K^*)$.  We present the guarantee of
OptSpace~\cite{keshavan10matrix} in a paraphrased form.  (The proof of
this result, and all subsequent proofs, are deferred to the supplementary material.)
\begin{thm}\label{thm:optspace-ours}
  There exists a constant $C_0$ such that for $T_1 \ge C_0 \sig^2(\mu_0^2
  + \mu_1^2) \fr{\kappa^6}{\Sigma_{\min}(\K^*)^2} dr(r + \log d)$, we
  have that, with probability at least $1 - 2/d_2^3$,
  \begin{align}\label{eq:thm-optspace-ours}
  \|\hK - \K^*\|_F  \le C_1 \kappa^2 \sig \fr{d^{3/2}\sqrt{r}}{\sqrt{T_1}}
  \end{align}
  where $C_1$ is an absolute constant.
\end{thm}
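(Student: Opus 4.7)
The plan is to reduce Stage~1 to a noisy matrix completion problem on $\K^*$ with the full entry set observed, and then invoke the recovery guarantee of OptSpace \cite{keshavan10matrix}. Writing $\tilde K_{ij} = K^*_{ij} + \bar\eta_{ij}$, where $\bar\eta_{ij}$ is the sample average of the $n_{ij}$ noise draws associated with the pair $(\x^{(i)},\z^{(j)})$, the construction in Stage~1 guarantees $n_{ij}\in\{\lfloor T_1/(d_1d_2)\rfloor,\,\lfloor T_1/(d_1d_2)\rfloor+1\}$. Hence each $\bar\eta_{ij}$ is sub-Gaussian with variance proxy at most $\sigma_{\mathrm{eff}}^2 := \sigma^2/\lfloor T_1/(d_1d_2)\rfloor = O(\sigma^2 d_1 d_2/T_1)$, and the entries of the noise matrix $\bar N=(\bar\eta_{ij})$ are mutually independent conditional on the allocation.

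Next I would bound the spectral norm of $\bar N$ by standard sub-Gaussian matrix concentration, obtaining $\|\bar N\|_2 \le C\sigma_{\mathrm{eff}}(\sqrt{d_1}+\sqrt{d_2}) = O(\sigma\, d^{3/2}/\sqrt{T_1})$ with probability at least $1-2/d_2^3$, using $d_2=\Theta(d_1)$ and $d=\max\{d_1,d_2\}$. Because the observation pattern covers every entry ($|E|=d_1d_2$), OptSpace applies with sampling rate one and yields a spectral-norm recovery bound of the form $\|\hK-\K^*\|_2 \le C'\kappa^2\|\bar N\|_2$ once its sample-complexity and signal-to-noise preconditions are met. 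Since $\hK$ and $\K^*$ both have rank at most $r$, the difference has rank at most $2r$, so $\|\hK-\K^*\|_F \le \sqrt{2r}\,\|\hK-\K^*\|_2$; chaining these inequalities gives exactly $\|\hK-\K^*\|_F \le C_1\kappa^2 \sigma\, d^{3/2}\sqrt{r/T_1}$ as claimed in \eqref{eq:thm-optspace-ours}.

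It remains to verify that the assumed lower bound on $T_1$ implies OptSpace's preconditions. Full sampling trivializes the ``enough observations'' condition as soon as $T_1\ge d_1 d_2$; the incoherence parameters $(\mu_0,\mu_1)$ of $\K^*$ enter only through OptSpace's signal-to-noise requirement, which takes the form $\|\bar N\|_2 \lesssim \Sigma_{\min}(\K^*)/(\kappa^{O(1)}\sqrt{\mu_0^2+\mu_1^2}\,r)$ up to logarithmic factors. Substituting the spectral-norm bound on $\bar N$ and solving for $T_1$ reproduces exactly the threshold $T_1 \ge C_0\sigma^2(\mu_0^2+\mu_1^2)\kappa^6 dr(r+\log d)/\Sigma_{\min}(\K^*)^2$ stated in the theorem, after absorbing constants into $C_0$.

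The main obstacle I expect is bookkeeping: the allocation $n_{ij}$ is slightly non-uniform so $\bar N$ is not identically distributed, and OptSpace's main theorem in \cite{keshavan10matrix} is phrased for a somewhat different noise/sampling setup. Both issues are handled by uniformly inflating the variance proxy to $\sigma^2/\lfloor T_1/(d_1d_2)\rfloor$ and by instantiating OptSpace in the ``complete observations'' regime; once this is done, the argument is essentially a single application of OptSpace together with sub-Gaussian matrix concentration, with no genuinely new analytic content.
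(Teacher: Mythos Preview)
Your plan matches the paper's: reduce to noisy matrix completion on $\K^*$, bound the effective noise spectral norm, and invoke OptSpace. Two corrections. First, \citet[Theorem~1.2]{keshavan10matrix} already gives a Frobenius bound $\|\hK-\K^*\|_F \le C'\kappa^2 \fr{d^2\sqrt{r}}{|E|}\|\Z^E\|_2$, so your spectral-norm claim and rank-$2r$ detour are unnecessary (and the spectral-norm form you quote is not what OptSpace actually states). Second, you only handle $T_1\ge d_1d_2$ (otherwise $\lfloor T_1/(d_1d_2)\rfloor=0$ and your $\sigma_{\mathrm{eff}}$ is undefined); the paper also covers $T_1<d_1d_2$ by taking $|E|=T_1$, $\rho=\sigma$, and in either regime $\rho/\sqrt{|E|}$ is of order $\sigma/\sqrt{T_1}$, so the same final bound falls out. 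Finally, full sampling does \emph{not} trivialize OptSpace's ``enough observations'' precondition $|E|\ge C'\kappa^6(\mu_0^2+\mu_1^2)dr(r+\log d)$: that is where the incoherence parameters $(\mu_0,\mu_1)$ enter, not the signal-to-noise condition as you write. These are bookkeeping fixes; the structure of your argument is correct.
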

The original theorem from~\citet{keshavan10matrix} assumes
$T_1 \le d_1d_2$ and does not allow repeated sampling.
However, we show in the proof that the same guarantee holds for $T_1 > d_1d_2$ since repeated
sampling of entries has the effect of reducing the noise parameter
$\sig$.

Our recovery of an estimate $\hK$ of $\K^*$ implies the bound $\| \hTh
- \Th^*\|_F \le \|\X^{-1}\|_2 \|\Z^{-1}\|_2 \tau$ where $\tau$ is the
RHS of~\eqref{eq:thm-optspace-ours}.  However, our goal in stage 1
is to obtain bounds on the subspace estimation errors.  That is, given
the SVDs $\hTh = \hU \what{\mathbf{S}} \hV^\T$ and $\Th^* =
\U^* \S^* \V^{*\T}$, we wish to identify how
close $\hU$ ($\hV$) is to $\U^*$ ($\V^*$ respectively).
Such guarantees on the
subspace error can be obtained via the  $\sin \Theta$ theorem
by~\citet{stewart90matrix}, which we restate in our supplementary material.  Roughly, this theorem bounds the canonical angles between two
subspaces by the Frobenius norm of the difference between the two
matrices.  Recall that $s^*_r$ is the $r$-th largest singular value of
$\Th^*$.
\begin{thm}\label{thm:sintheta-ours} 
  Suppose we invoke OptSpace to compute $\hK$ as an estimate of the
  matrix $\K^*$.  After stage 1 of ESTR with $T_1$ satisfying the
  condition of Theorem~\ref{thm:optspace-ours}, we have, with probability at
  least $1-2/d_2^3$,
  \begin{align}
  \|\hUp^\T \U^* \|_F \|\hVp^\T \V^*\|_F \le \fr{\|\X^{-1}\|_2^2 \|\Z^{-1}\|_2^2 }{(s^*_r)^2}\tau^2
  \end{align}
  where $\tau = C_1 \kappa^2 \sig {d^{3/2}\sqrt{r}}/{\sqrt{T_1}}$.
\end{thm}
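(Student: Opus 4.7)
The plan is to proceed in three short steps: convert the matrix-recovery guarantee of Theorem~\ref{thm:optspace-ours} into a Frobenius-level perturbation bound on $\hTh$ versus $\Th^*$, invoke the Wedin $\sin\Theta$ theorem to turn this into individual bounds on the left and right top-$r$ singular subspaces, and finally multiply the two subspace bounds together.

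First I would observe that by construction $\K^* = \X\Th^*\Z^\T$ and $\hTh = \X^{-1}\hK(\Z^\T)^{-1}$, so
\begin{align*}
\hTh - \Th^* = \X^{-1}(\hK - \K^*)(\Z^\T)^{-1}.
\end{align*}
The submultiplicative inequality $\|ABC\|_F \le \|A\|_2\,\|B\|_F\,\|C\|_2$ together with Theorem~\ref{thm:optspace-ours} then yields, on the same high-probability event,
\begin{align*}
\|\hTh - \Th^*\|_F \le \|\X^{-1}\|_2\,\|\Z^{-1}\|_2\,\tau.
\end{align*}

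Next, since $\Th^*$ has rank exactly $r$ the $(r{+}1)$-th singular value of $\Th^*$ is zero, so the relevant singular-value gap appearing in the denominator of Wedin's bound collapses to $s^*_r$. Applying the $\sin\Theta$ theorem to the pair $(\hTh,\Th^*)$ then bounds each of the two subspace angles by the Frobenius-level perturbation divided by this gap:
\begin{align*}
\|\hUp^\T \U^*\|_F \le \frac{\|\hTh - \Th^*\|_F}{s^*_r}, \quad \|\hVp^\T \V^*\|_F \le \frac{\|\hTh - \Th^*\|_F}{s^*_r}.
\end{align*}
Multiplying these two inequalities and substituting the first-step bound yields exactly the product bound in the statement.

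The main delicate point will be invoking Wedin in exactly the one-sided Frobenius form above. Some standard formulations instead bound the joint quantity $\sqrt{\|\hUp^\T\U^*\|_F^2 + \|\hVp^\T\V^*\|_F^2}$, possibly with an extra $\sqrt{2}$ factor. In either case each individual subspace angle is still controlled by the same right-hand side up to a constant, so the product bound goes through after absorbing any such constant into $C_1$ inside $\tau$. A secondary sanity check is that the gap $s^*_r$ must exceed the perturbation for the Wedin bound to be informative; the lower bound on $T_1$ imposed by Theorem~\ref{thm:optspace-ours} is precisely what forces $\tau$, and hence $\|\hTh - \Th^*\|_F$, to be small relative to $S_r \le s^*_r$.
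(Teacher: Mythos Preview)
Your proposal is correct and follows essentially the same route as the paper: bound $\|\hTh-\Th^*\|_F$ via submultiplicativity and Theorem~\ref{thm:optspace-ours}, then invoke Wedin with gap $s^*_r$.

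The one small difference is in how the product bound is extracted. You bound $\|\hUp^\T\U^*\|_F$ and $\|\hVp^\T\V^*\|_F$ separately and multiply, which (depending on the Wedin variant) may cost an extra factor of $2$ that you absorb into $C_1$. The paper instead uses the joint Frobenius form of Wedin,
\[
\|\hUp^\T\U^*\|_F^2 + \|\hVp^\T\V^*\|_F^2 \;\le\; \frac{\|\R\|_F^2+\|\S\|_F^2}{(s^*_r)^2}\;\le\;\frac{2\|\hTh-\Th^*\|_F^2}{(s^*_r)^2},
\]
and then applies $2ab\le a^2+b^2$ to the left side. The factor $2$ from the residual bound cancels exactly against the $2$ from AM--GM, so the stated inequality holds with the \emph{same} constant $C_1$ as in Theorem~\ref{thm:optspace-ours}, with nothing to absorb.
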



\section{Stage 2: Almost-low-dimensional linear bandits}
\label{sec:stage2}

The goal of stage 2 is to exploit the subspaces $\hU$ and $\hV$
estimated in stage 1 to perform efficient bandit learning.  At first,
it is tempting to project all the left and right arms to
$r$-dimensional subspaces using $\hU$ and $\hV$, respectively, which seems to be a bilinear bandit problem with an $r$ by $r$ unknown matrix.
One can then reduce it to an $r^2$-dimensional linear bandit problem and solve it by standard algorithms such as OFUL~\cite{ay11improved}.
Indeed, if $\hU$ and $\hV$ {\em exactly}
span the row and column spaces of $\Th^*$, this strategy yields a
regret bound of $\tilO(r^2\sqrt{T})$.  In reality, these matrices (subspaces) are
not exact, so there is model mismatch, making it difficult to apply
standard regret analysis. 
The upper confidence bound (UCB) used in popular algorithms becomes
invalid, and there is no known correction that leads to a regret bound
lower than $\tilO(d_1d_2\sqrt{T})$, to the best of our knowledge.

In this section, we show how stage 2 of our approach avoids the
mismatch issue by returning to the full $d_1d_2$-dimensional space,
allowing the subspace estimates to be inexact, but penalizing those
components that are complementary to $\hU$ and $\hV$. 
This effectively constrains the hypothesis space to be much smaller than the full $d_1d_2$-dimensional space.
We show how the bilinear bandit problem with good subspace estimates can be turned into the
\emph{almost low-dimensional linear bandit problem}, and how much
penalization / regularization is needed to achieve a low overall
regret bound.  Finally, we state our main theorem showing the overall
regret bound of ESTR.

\vspace{-.5em}
\paragraph{Reduction to linear bandits.}
Recall that $\Th^* = \U^* \S^* {\V^*}^\T$ is the SVD of $\Th^*$ (where
$\S^*$ is $r \times r$ diagonal) and that $\hUp$ and $\hVp$ are the
complementary subspace of $\hU$ and $\hV$ respectively.  Let ${\M} =
[\hU\; \hUp]^\T \Th^* [\hV\; \hVp]$ be a rotated version of $\Th^*$.
Then we have
\begin{align*}
  \Th^* &= [\hU\; \hUp] \M [\hV\; \hVp]^\T  \quad \text{ and }
  \\  \x^\T  \Th^* \z &= ([\hU\; \hUp]^\T\x)^\T \M ([\hV\;\hVp]^\T \z) \;.
\end{align*}
Thus, the bilinear bandit problem with the unknown $\Th^*$ with arm
sets $\cX$ and $\cZ$ is equivalent to the one with the unknown $\M$
with arm sets $\cX' = \{\x' = [\hU\; \hUp]^\T \x \mid \x\in\cX\}$ and
$\cZ'$ (defined similarly).  As mentioned earlier, this problem can be
cast as a $d_1 d_2$-dimensional linear bandit problem by considering
the unknown vector $\th^* = \vec(\M)$.  The difference is, however,
that we have learnt something about the subspace in stage 1.  We
define $\th^*$ to be a rearranged version of $\vec(\M)$ so that the
last $(d_1-r)\cdot(d_2-r)$ dimensions of $\th^*$ are $M_{ij}$ for $i \in \{r+1,\ldots,d_1\}$ and $j \in \{r+1,\ldots,d_2\}$; that is, letting $k := d_1d_2 - (d_1-r)\cdot (d_2-r)$,
\begin{subequations} \label{eq:def-thstar}
\begin{align*}
&\th^*_{1:k} = [\vec(\M_{1:r,1:r});\  \vec(\M_{r+1:d_1, 1:r});\  \vec(\M_{1:r, r+1:d_2})],
\\&\th^*_{k+1:p} = \vec(\M_{r+1:d_1,r+1:d_2})~.
\end{align*}
\end{subequations}
Then we have
\begin{align}\label{eq:thstar-tail-bound}
\begin{split}
\|\th^*_{k+1:p}\|_2^2
  &= \sum_{i>r \wedge j>r} M^2_{ij} 
    = \|\hUp^\T (\U^*\S^*\V^{*\T} )\hVp \|^2_F
\\&\le  \|\hUp^\T \U^*\|^2_F \|\S^*\|^2_2 \|\hVp^\T \V^*\|^2_F \ ,
\end{split}
\end{align}
which implies $\|\th^*_{k+1:p}\|_2 = \cO(d^3r/T_1)$ by Theorem~\ref{thm:sintheta-ours}.
Our knowledge on the subspace results in the knowledge of the norm of
certain coordinates!  Can we exploit this knowledge to enjoy a better
regret bound than $\tilO(d_1d_2\sqrt{T})$?  We answer this question in
the affirmative below.

\vspace{-.5em}
\paragraph{Almost-low-dimensional OFUL (LowOFUL).}
We now focus on an abstraction of the conversion described in the previous paragraph, which we call the \textit{almost-low-dimensional linear bandit problem}.
In the standard linear bandit problem in $p$ dimensions, the player chooses an arm $\a_t$ at time $t$ from an arm set $\cA\subseteq\RR^p$ and observes a noisy reward $y_t = \la \a_t ,\th^* \ra + \eta_t $, where the noise $\eta_t$ has the same properties as in \eqref{eq:noise-model}.  
We assume that $\|\a\|_2 \le 1$ for all $\a\in\cA$, and $\|\th^*\|_2 \le B$ for some known constant $B>0$.
In almost-low-dimensional linear bandits, we have additional knowledge that $\|\th_{k+1:p}^*\|_2 \le B_\perp$ for some index $k$ and some constant $B_\perp$ (ideally $\ll B$).
This means that all-but-$k$ dimensions of $\th^*$ are close to zero.

To exploit the extra knowledge on the unknown, we propose \textit{almost-low-dimensional OFUL} (LowOFUL) that extends the standard linear bandit algorithm OFUL~\cite{ay11improved}.
To describe OFUL, define the design matrix
$\A \in \RR^{t \times p}$ with rows $\a_s^\T$, $s = 1,2,\dotsc,t$ and
the vector of rewards $\y = [y_1,\ldots,y_t]^\T$.  
The key estimator is based on regression with the standard squared $\ell_2$-norm regularizer, as follows:
\begin{align}\label{eq:def-hth-oful}
  \hth_t = \arg \min_{\th} \, \fr{1}{2} \| \A \th - \y \|_2^2 + \fr{\lam}{2}\|\th\|_2^2 = (\lambda\I + \A^\T\A)^{-1} \A^\T \y \;.
\end{align}
OFUL then defines a confidence ellipsoid around $\hth_t$ based on
which one can compute an upper confidence bound on the mean reward of
any arm.
In our variant, we allow a different regularization for each
coordinate, replacing the regularizer $\fr{\lambda}{2}\|\th\|_2^2$ by
$\fr{1}{2}\| \th \|^2_{\Lam} = \fr{1}{2} \th^\top \Lam \th$ for some
positive diagonal matrix $\Lam$.  Specifically, we define $\Lam =
\diag( \lam,\ldots,\lam, \lamp, \ldots, \lamp )$, where $\lam$
occupies the first $k$ diagonal entries and $\lamp$ the last $p-k$
positions. With this modification, the estimator becomes
\begin{align}\label{eq:def-hth}
   \hth_t = \arg \min_{\th} \fr{1}{2} \| \A \th - \y \|_2^2 +
   \fr{1}{2}\|\th\|_{\Lam}^2 = (\Lam + \A^\T\A)^{-1} \A^\T \y \;.
\end{align}
Define $\V_t = \Lam + \sum_{s=1}^t \a_t \a_t^\T = \Lam + \A^\T \A$ and
let $\delta$ be the failure rate we are willing to endure.  The
confidence ellipsoid for $\th^*$ is
\begin{align}\label{eq:def-ct}
\begin{split}
  c_t &= \left\{\th: \|\th - \hth_t\|_{\V_t} \le \sqrt{\beta_t} \right\}  \ \ \ \text{where}
\\\sqrt{\beta_t} &= \sig\sqrt{\log\fr{|\V_t|}{|\Lam|\delta^2}} + \sqrt{\lam}B + \sqrt{\lamp} B_\perp.
\end{split}
\end{align}
This ellipsoid enjoys the following guarantee, which is a direct
consequence of~\citet[Lemma 3]{valko14spectral} that is based on the
self-normalized martingale inequality of~\citet[Theorem
  1]{ay11improved}.
\begin{lem}\label{lem:lowoful-concentration}
  With probability at least $1-\dt$, we have $\th^* \in c_t$ for all
  $t \ge 1$.
\end{lem}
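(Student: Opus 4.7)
The plan is to mimic the standard confidence-ellipsoid analysis of OFUL (Abbasi-Yadkori et al., 2011), but with the identity regularizer $\lambda \I$ replaced throughout by the diagonal matrix $\Lam = \diag(\lam,\ldots,\lam,\lamp,\ldots,\lamp)$, and to close the argument by exploiting the block norm bounds $\|\th^*\|_2 \le B$ and $\|\th^*_{k+1:p}\|_2 \le B_\perp$ separately on the two blocks that $\Lam$ penalizes with different strengths.

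First, I would substitute the noise model $\y = \A\th^* + \bfeta$ (where $\bfeta = (\eta_1,\ldots,\eta_t)^\T$) into the closed-form estimator \eqref{eq:def-hth} to obtain the familiar identity
\begin{align*}
\hth_t - \th^* \;=\; \V_t^{-1}\bigl(\A^\T \bfeta \;-\; \Lam\,\th^*\bigr).
\end{align*}
Taking the $\V_t$-norm of both sides and using the triangle inequality yields
\begin{align*}
\|\hth_t - \th^*\|_{\V_t} \;\le\; \|\A^\T \bfeta\|_{\V_t^{-1}} \;+\; \|\Lam \th^*\|_{\V_t^{-1}}.
\end{align*}
So it suffices to control the two terms on the right.

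For the noise term $\|\A^\T \bfeta\|_{\V_t^{-1}}$, I would apply the self-normalized martingale tail bound of Abbasi-Yadkori et al.~(Theorem 1 of \cite{ay11improved}) via \cite[Lemma 3]{valko14spectral}, which accommodates an arbitrary positive definite "prior covariance" in place of $\lam \I$. This gives, with probability at least $1-\dt$ and simultaneously for all $t \ge 1$,
\begin{align*}
\|\A^\T \bfeta\|_{\V_t^{-1}}^2 \;\le\; 2\sig^2 \log\!\frac{|\V_t|^{1/2}\,|\Lam|^{-1/2}}{\dt} \;=\; \sig^2 \log\!\frac{|\V_t|}{|\Lam|\dt^2},
\end{align*}
matching the first term of $\sqrt{\beta_t}$ in \eqref{eq:def-ct}.

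For the bias term, I would use $\V_t^{-1} \preceq \Lam^{-1}$ (since $\V_t = \Lam + \A^\T\A \succeq \Lam$) to get $\|\Lam \th^*\|_{\V_t^{-1}}^2 = \th^{*\T}\Lam \V_t^{-1} \Lam \th^* \le \th^{*\T} \Lam \th^* = \lam\|\th^*_{1:k}\|_2^2 + \lamp\|\th^*_{k+1:p}\|_2^2$. Using $\|\th^*_{1:k}\|_2 \le \|\th^*\|_2 \le B$ and $\|\th^*_{k+1:p}\|_2 \le B_\perp$, and the elementary inequality $\sqrt{a+b}\le \sqrt{a}+\sqrt{b}$, we obtain $\|\Lam \th^*\|_{\V_t^{-1}} \le \sqrt{\lam}\,B + \sqrt{\lamp}\,B_\perp$, matching the remaining terms of $\sqrt{\beta_t}$. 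Combining both bounds gives $\|\hth_t - \th^*\|_{\V_t} \le \sqrt{\beta_t}$ uniformly in $t$, which is exactly $\th^* \in c_t$.

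The only subtlety I anticipate is making sure the self-normalized bound of \cite{ay11improved} is invoked in its version that allows a general PD prior covariance (so that the determinantal ratio $|\V_t|/|\Lam|$ appears rather than $|\V_t|/(\lam^p)$); this is precisely the generalization handled by \cite[Lemma 3]{valko14spectral}, so once that is cited the rest is a routine two-term triangle-inequality decomposition. The bias-term calculation is the only place where the block structure of $\Lam$ is used, and it is handled cleanly by splitting $\th^*$ into its first $k$ and last $p-k$ coordinates.
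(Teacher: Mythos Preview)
Your proposal is correct and takes essentially the same approach as the paper. The paper's proof is terser: it simply cites \cite[Lemma~3]{valko14spectral} as a black box (which already packages the noise/bias decomposition and self-normalized bound you spell out) and then only supplies the one missing ingredient, namely the bound $\|\th^*\|_\Lam \le \sqrt{\lam}B + \sqrt{\lamp}B_\perp$ via the block split---exactly the computation you perform for the bias term.
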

We summarize LowOFUL in Algorithm~\ref{alg:lowoful}, where $\max_{\th
  \in c_{t-1}} \la \th, \a\ra$ can be simplified to $\la \hth_{t-1},
\a \ra + \sqrt{\beta_{t-1}} \| \a \|_{\V_{t-1}^{-1}}$.

\begin{algorithm}[t]
  \begin{algorithmic}[1]
    \STATE \textbf{Input}: $T$, $k$, the arm set $\cA\subseteq\RR^p$, failure rate $\dt$, and positive constants $B$, $B_\perp$, $\lambda$, $\lambda_\perp$.
    \STATE Set $\Lam = \diag( \lam,\ldots,\lam, \lamp, \ldots, \lamp )$ \,  where $\lam$ occupies the first $k$ diagonal entries.
    \FOR {$t=1,2,\ldots,T$}
    \STATE Compute $\a_t = \arg \max_{\a \in \cA} \max_{\th \in c_{t-1}} \la \th, \a \ra$.
    \STATE Pull arm $\a_t$.
    \STATE Receive reward $y_t$.
    \STATE Set $c_t$ as~\eqref{eq:def-ct}. 
    \ENDFOR 
  \end{algorithmic}
  \caption{LowOFUL}
  \label{alg:lowoful}
\end{algorithm}

We now state the regret bound of LowOFUL in
Theorem~\ref{thm:lowoful-regret}, which is based on the standard
linear bandit regret analysis dating back to~\citet{auer02using}.
\begin{thm}\label{thm:lowoful-regret}
 The regret of LowOFUL is, with probability at least $1-\dt$,
\begin{align}\label{eq:oful-regret-bound}
\cO\lt(\sqrt{\log \fr{|\V_T|}{|\Lam|}} \lt( \sig\sqrt{\log
  \fr{|\V_T|}{|\Lam|\dt^2}} + \sqrt{\lam} B + \sqrt{\lamp} B_\perp
\rt)\cdot \sqrt{T}\rt) \;.
\end{align}
\end{thm}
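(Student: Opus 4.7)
The plan is to follow the standard OFUL-style regret decomposition due to \citet{ay11improved}, adapted to accommodate the weighted regularizer $\Lam$ in place of $\lam\I$. First, condition on the good event of Lemma~\ref{lem:lowoful-concentration}, which holds with probability at least $1-\dt$. For each round $t$, let $\a_t^* \in \arg\max_{\a\in\cA}\la\th^*,\a\ra$ and let $\tilth_t \in c_{t-1}$ be a maximizer so that $\la \tilth_t, \a_t\ra = \max_{\th \in c_{t-1}}\la\th, \a_t\ra$. Since $\th^* \in c_{t-1}$ and LowOFUL selects $\a_t$ by the joint optimism rule, one obtains
\begin{align*}
  r_t := \la \th^*, \a_t^*\ra - \la \th^*, \a_t\ra \le \la \tilth_t - \th^*, \a_t \ra \le \|\tilth_t - \th^*\|_{\V_{t-1}} \cdot \|\a_t\|_{\V_{t-1}^{-1}} \le 2\sqrt{\beta_{t-1}}\, \|\a_t\|_{\V_{t-1}^{-1}},
\end{align*}
where the Cauchy--Schwarz step uses the $\V_{t-1}$-weighted inner product and the triangle inequality inside the ellipsoid $c_{t-1}$.

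Next, I would generalize the elliptical potential lemma (e.g., Lemma 11 of \citet{ay11improved}) from the isotropic prior $\lam\I$ to the diagonal prior $\Lam$. The identity $|\V_t| = |\V_{t-1}| \cdot (1 + \|\a_t\|_{\V_{t-1}^{-1}}^2)$ holds for any positive definite $\V_{t-1}$ (regardless of whether $\Lam$ is a scalar multiple of identity), so telescoping gives
\begin{align*}
  \sum_{t=1}^T \log\bigl(1 + \|\a_t\|_{\V_{t-1}^{-1}}^2\bigr) = \log \fr{|\V_T|}{|\Lam|}.
\end{align*}
Combined with the elementary inequality $x \le 2\log(1+x)$ on $[0,1]$, this yields $\sum_{t=1}^T \min\{\|\a_t\|_{\V_{t-1}^{-1}}^2, 1\} \le 2\log(|\V_T|/|\Lam|)$. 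Choosing $\lam$ and $\lamp$ large enough (or invoking the clipping $r_t \le \min\{2\sqrt{\beta_{t-1}}\|\a_t\|_{\V_{t-1}^{-1}}, 2\}$ coming from the bounded reward assumption) to ensure the minimum-with-one is absorbed by $\sqrt{\beta_t}$ is routine.

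Putting the two pieces together via Cauchy--Schwarz over $t$,
\begin{align*}
  \sum_{t=1}^T r_t \le 2\sqrt{\beta_T} \sum_{t=1}^T \min\{\|\a_t\|_{\V_{t-1}^{-1}}, 1\} \le 2\sqrt{\beta_T} \sqrt{T}\sqrt{\,2\log \fr{|\V_T|}{|\Lam|}\,},
\end{align*}
and substituting the definition $\sqrt{\beta_T} = \sig\sqrt{\log(|\V_T|/(|\Lam|\dt^2))} + \sqrt{\lam}\, B + \sqrt{\lamp}\, B_\perp$ recovers the stated expression up to an absolute constant.

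The main obstacle is verifying that the standard confidence-ellipsoid-to-regret machinery goes through unchanged when the regularization has two distinct levels $\lam$ and $\lamp$; the crucial check is that the self-normalized martingale bound that underlies $\beta_t$ (the one cited in Lemma~\ref{lem:lowoful-concentration} via \citet{valko14spectral}) tolerates an arbitrary positive definite prior, and that the determinant-ratio potential argument uses only the rank-one update $\V_t = \V_{t-1} + \a_t\a_t^\T$. Both are true, so the adaptation is conceptually mild; the remainder of the proof is a direct transcription of the OFUL analysis with $\lam\I$ replaced by $\Lam$ and $\sqrt{\lam}\,B$ replaced by $\sqrt{\lam}\, B + \sqrt{\lamp}\, B_\perp$ in the prior term of $\beta_t$.
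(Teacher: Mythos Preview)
Your proposal is correct and follows essentially the same OFUL-style decomposition as the paper's proof: optimism gives $r_t \le 2\sqrt{\beta_t}\,\|\a_t\|_{\V_{t-1}^{-1}}$, the determinant telescoping yields $\sum_t \log(1+\|\a_t\|_{\V_{t-1}^{-1}}^2)=\log(|\V_T|/|\Lam|)$, and Cauchy--Schwarz over time finishes. The only notable difference is in the clipping step: the paper clips $r_t$ at $2B$ (from $\|\th^*\|_2\le B$, $\|\a\|_2\le 1$) rather than at $2$, and then uses the inequality $\min\{a,x\}\le\max\{2,a\}\log(1+x)$ together with $\beta_T\ge\lam B^2$ to absorb the clip into a constant $\sqrt{\max\{2,1/\lam\}}$, avoiding any assumption that $\lam\ge 1$; your ``choose $\lam$ large enough or invoke clipping'' remark corresponds to this but is slightly less explicit.
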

In the standard linear bandit setting where $\lamp=\lam$ and
$B_\perp=B$, we recover the regret bound $\tilO(p\sqrt{T})$ of OFUL,
since $\log \fr{|\V_T|}{|\Lam|} =
\cO(p\sqrt{T})$~\cite[Lemma~10]{ay11improved}.

To alleviate the dependence on $p$ in the regret bound, we propose a
carefully chosen value of $\lamp$ in the following corollary.
\begin{cor}\label{cor:lowoful-regret}
 Then, the regret of LowOFUL with $\lamp =
 \fr{T}{k\log(1+\fr{T}{\lam})}$ is, with probability at least $1-\dt$,
 \begin{align*}
    \tilO\lt( (\sig k + \sqrt{k\lambda}B + \sqrt{T}B_\perp) \sqrt{T}\rt)\ .
 \end{align*}
\end{cor}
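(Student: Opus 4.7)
My plan is to start from Theorem~\ref{thm:lowoful-regret} and simply show that, with the prescribed choice of $\lamp$, the term $\log\fr{|\V_T|}{|\Lam|}$ scales as $\tilO(k)$ rather than $\tilO(p)$. All subsequent simplifications are just algebra.

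The key step is to exploit the block structure of $\Lam$. Write $\a_t = [\u_t;\v_t]$ with $\u_t\in\RR^k$ and $\v_t\in\RR^{p-k}$, and let $\A_1,\A_2$ be the corresponding submatrices of $\A$. Then
\begin{align*}
\V_T \;=\; \begin{pmatrix} \lam \I_k + \A_1^\T \A_1 & \A_1^\T \A_2 \\ \A_2^\T \A_1 & \lamp \I_{p-k} + \A_2^\T \A_2 \end{pmatrix},
\end{align*}
which is positive definite. By Fischer's inequality for PSD block matrices,
\begin{align*}
|\V_T| \;\le\; |\lam \I_k + \A_1^\T \A_1| \cdot |\lamp \I_{p-k} + \A_2^\T \A_2|.
\end{align*}
Dividing by $|\Lam|=\lam^k \lamp^{p-k}$ and taking $\log$ splits the quantity into a $k$-dimensional contribution and a $(p-k)$-dimensional one.

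For the $k$-dimensional piece, concavity of $\log\det$ (equivalently Jensen applied to the eigenvalues) gives
\begin{align*}
\log\det(\I_k + \A_1^\T \A_1/\lam) \;\le\; k\log\!\Big(1 + \tfrac{\tr(\A_1^\T\A_1)}{k\lam}\Big) \;\le\; k\log\!\big(1 + \tfrac{T}{k\lam}\big),
\end{align*}
using $\sum_t \|\u_t\|_2^2 \le \sum_t \|\a_t\|_2^2 \le T$. For the $(p-k)$-dimensional piece, the same Jensen bound combined with $\log(1+x)\le x$ yields
\begin{align*}
\log\det(\I_{p-k} + \A_2^\T \A_2/\lamp) \;\le\; (p-k)\log\!\Big(1 + \tfrac{T}{(p-k)\lamp}\Big) \;\le\; \tfrac{T}{\lamp}.
\end{align*}
Plugging in $\lamp = T/(k\log(1+T/\lam))$ makes the second contribution exactly $k\log(1+T/\lam)$, so altogether $\log\fr{|\V_T|}{|\Lam|} = \tilO(k)$.

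Substituting this into the bound in Theorem~\ref{thm:lowoful-regret} gives
\begin{align*}
\Regret_T \;=\; \tilO\!\Big(\sqrt{k}\,\big(\sig\sqrt{k} + \sqrt{\lam}\,B + \sqrt{\lamp}\,B_\perp\big)\sqrt{T}\Big).
\end{align*}
The first two terms inside the parentheses immediately become $\sig k$ and $\sqrt{k\lam}\,B$ after multiplication by $\sqrt{k}$. For the last term, $\sqrt{k\lamp} = \sqrt{T/\log(1+T/\lam)} \le \sqrt{T}$, so $\sqrt{k\lamp}\,B_\perp \le \sqrt{T}\,B_\perp$, yielding exactly $\tilO\!\big((\sig k + \sqrt{k\lam}\,B + \sqrt{T}\,B_\perp)\sqrt{T}\big)$, as claimed. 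The only nontrivial step is the use of Fischer's inequality to isolate the $k$-dimensional block; everything else is standard log-determinant manipulation.
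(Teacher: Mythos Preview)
Your proof is correct and arrives at the same bound as the paper, but you reach it by a different route. The paper invokes a black-box log-determinant lemma from \citet{valko14spectral} (their Lemma~5), which states
\[
\log\fr{|\V_T|}{|\Lam|} \;\le\; \max_{t_1,\dots,t_p \ge 0,\; \sum_i t_i = T}\ \sum_{i=1}^{p} \log\!\Big(1+\fr{t_i}{\lam_i}\Big),
\]
and then splits the sum at index $k$, bounding the first block by $k\log(1+T/\lam)$ and the second by $T/\lamp$. You instead use Fischer's inequality to separate the two diagonal blocks of $\V_T$ directly, and then Jensen on each block. Both arguments land on the same $2k\log(1+T/\lam)$ bound for $\log(|\V_T|/|\Lam|)$ (your first block is even slightly sharper, $k\log(1+T/(k\lam))$). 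The remaining algebra---in particular the observation that $\sqrt{k\lamp} = \sqrt{T/\log(1+T/\lam)} = \tilO(\sqrt{T})$---is identical.

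The trade-off is modest: your argument is self-contained and avoids citing an external lemma, at the price of introducing Fischer's inequality (which is standard but perhaps less familiar in this literature). The paper's route is more modular, reusing a tool already established for spectral bandits. Neither approach is materially more general or more elementary than the other; they are two equivalent ways to exploit the block structure of $\Lam$.
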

The bound improves the dependence on dimensionality from $p$ to $k$,
but introduces an extra factor of $\sqrt{T}$ to $B_\perp$, resulting in
linear regret.  
While this choice is not interesting in general, we show that it is useful for our case: 
Since $\|\th^*_{k+1:p}\|_2 = \cO(1/T_1)$, we can set $B_\perp = \cO(1/T_1)$ to be a valid upper bound of $\|\th^*_{k+1:p}\|_2$.
By setting $T_1 = \Theta(\sqrt{T})$, the regret bound in
Corollary~\ref{cor:lowoful-regret} scales with $\sqrt{T}$ rather than $T$.


Concretely, using \eqref{eq:thstar-tail-bound}, we set 
\begin{align}\label{eq:def-B-Bp}
\begin{split}
B &= S_F \quad\text{ and } \quad  B_\perp = S_2 \cdot \gam(T_1)  \quad \text{ where }
\\\gam(T_1) &= \fr{\|\X^{-1}\|_2^2
  \|\Z^{-1}\|_2^2 }{(S_r)^2}\cdot C_1^2 \lt(\fr{S_2}{S_r}\rt)^4 \sig^2.
\fr{d^3 r}{T_1}~.
\end{split}
\end{align}
$B$ and $B_\perp$ are valid upper bounds of $\|\th^*\|_2$ and $\|\th^*_{k+1:p}\|_2$, respectively, with high probability.
Note we must use $S_2$, $S_r$, and $S_2/S_r$ instead of $s^*_1$, $s^*_r$,
and $\kappa$, respectively, since the latter variables are unknown to the
learner.

\vspace{-.6em}
\paragraph{Overall regret.}
Theorem~\ref{thm:estr-regret} shows the overall regret bound of ESTR.
\begin{thm} \label{thm:estr-regret}
  Suppose we run ESTR (Algorithm~\ref{alg:lowoful}) with $T_1 \ge C_0 \sig^2(\mu_0^2 + \mu_1^2)
  \fr{\kappa^6}{\Sigma_{\min}(\K^*)^2} dr(r + \log d)$.  We invoke
  LowOFUL in stage 2 with $p=d_1 d_2$, $k=r\cdot(d_1 + d_2 - r)$, $\th^*$
  defined as~\eqref{eq:def-thstar}, the rotated arm sets $\cX'$ and $\cZ'$,
  $\lamp = \fr{T_2}{k\log(1+T_2/\lambda)}$, and $B$ and $B_\perp$ as
  in~\eqref{eq:def-B-Bp}.  The regret of ESTR is, with probability at
  least $1 - \delta - 2/ d_2^3 $, 
  \begin{align*}
    \tilO\lt(  s^*_1 T_1 + T \cdot \fr{\|\X^{-1}\|_2^2 \|\Z^{-1}\|_2^2 (S_2^5/S_r^6) \sig^2 d^3 r}{T_1} \rt).
  \end{align*}
\end{thm}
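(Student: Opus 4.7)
The plan is to decompose the cumulative regret as the sum of Stage 1 and Stage 2 contributions, bound each piece, and then union-bound the two high-probability events invoked (Theorem~\ref{thm:sintheta-ours} and Lemma~\ref{lem:lowoful-concentration}). The Stage 1 bound is immediate: since $\|\x\|_2 \le 1$, $\|\z\|_2 \le 1$, and $\|\Th^*\|_2 = s^*_1$, the instantaneous regret of any pulled pair is at most $2 s^*_1$, giving Stage 1 regret at most $2 s^*_1 T_1$ deterministically. This produces the first term of the theorem.

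For Stage 2, I would apply Corollary~\ref{cor:lowoful-regret} to the reduced linear bandit with ambient dimension $p = d_1 d_2$, low-dimensional index $k = r(d_1 + d_2 - r)$, arm set $\cA$, and regularization $\lamp = T_2/(k \log(1 + T_2/\lam))$. The arm-norm hypothesis is met since $\|\vec(\x \z^\T)\|_2 = \|\x\|_2 \|\z\|_2 \le 1$. The key precondition is that $B$ and $B_\perp$ are valid upper bounds on $\|\th^*\|_2$ and $\|\th^*_{k+1:p}\|_2$, respectively. For $B$, orthogonal invariance gives $\|\th^*\|_2 = \|\vec(\M)\|_2 = \|\M\|_F = \|[\hU\ \hUp]^\T \Th^* [\hV\ \hVp]\|_F = \|\Th^*\|_F \le S_F$. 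For $B_\perp$, I would chain~\eqref{eq:thstar-tail-bound} with Theorem~\ref{thm:sintheta-ours}:
\begin{align*}
\|\th^*_{k+1:p}\|_2 &\le \|\S^*\|_2 \cdot \|\hUp^\T \U^*\|_F\, \|\hVp^\T \V^*\|_F \le \frac{s^*_1 \|\X^{-1}\|_2^2 \|\Z^{-1}\|_2^2}{(s^*_r)^2}\, \tau^2,
\end{align*}
which holds on the subspace-recovery event of probability at least $1 - 2/d_2^3$. Replacing $s^*_1 \le S_2$, $s^*_r \ge S_r$, and the condition number $\kappa$ of $\K^*$ appearing in $\tau$ by a learner-known surrogate in $S_2/S_r$ yields a bound of $S_2 \gam(T_1) = B_\perp$.

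With $B$ and $B_\perp$ validated, Corollary~\ref{cor:lowoful-regret} gives Stage 2 regret
\begin{align*}
\tilO\left((\sig k + \sqrt{k \lam}\, S_F + \sqrt{T_2}\, B_\perp)\sqrt{T_2}\right) &= \tilO\left(\sig\, d r \sqrt{T} + \sqrt{d r \lam}\, S_F \sqrt{T} + T\, B_\perp\right).
\end{align*}
Substituting the closed form of $B_\perp = S_2 \gam(T_1)$ from~\eqref{eq:def-B-Bp} turns the last summand into $T \cdot \|\X^{-1}\|_2^2 \|\Z^{-1}\|_2^2 (S_2^5/S_r^6) \sig^2 d^3 r / T_1$ up to constants, producing the second term of the claim. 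The remaining $\sig d r \sqrt{T}$ and $\sqrt{dr\lam}\, S_F \sqrt{T}$ contributions are of lower order under the hypothesis on $T_1$ and are absorbed into $\tilO$. A union bound over the Stage 1 recovery event and the confidence sequence from Lemma~\ref{lem:lowoful-concentration} yields a total failure probability of at most $\delta + 2/d_2^3$, as claimed.

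The main obstacle is the validation of $B_\perp$: $\gam(T_1)$ is expressed in learner-known constants $S_2$, $S_r$, $\|\X^{-1}\|_2$, $\|\Z^{-1}\|_2$, whereas Theorem~\ref{thm:sintheta-ours} delivers its bound in the unknowns $s^*_1$, $s^*_r$, and the condition number $\kappa$ of $\K^*$ (not of $\Th^*$). Bounding $\kappa$ by a surrogate in $S_2/S_r$ without inflating the $d^3 r$ scaling is where one must be most careful; this is the step I would execute with the greatest attention to constants.
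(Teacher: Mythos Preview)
Your proposal is correct and follows essentially the same route as the paper: split the regret into Stage~1 (trivially bounded by $2 s^*_1 T_1$) and Stage~2 (apply Corollary~\ref{cor:lowoful-regret} with $B,B_\perp$ validated via~\eqref{eq:thstar-tail-bound} and Theorem~\ref{thm:sintheta-ours}), then union-bound the two failure events. Your flag about replacing $\kappa$ (the condition number of $\K^* = \X\Th^*\Z^\T$, not of $\Th^*$) by a surrogate in $S_2/S_r$ is well-taken; the paper makes exactly this substitution in defining $\gam(T_1)$ without further comment, so you are being more careful than the original at precisely the step you identified as the main obstacle.
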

One can see that there exists an optimal choice of $T_1$, which we
state in the following corollary.
\begin{cor}\label{cor:estr-regret}
  Suppose the assumptions in Theorem~\ref{thm:estr-regret} hold.  If
  $T_1 = \Theta\lt( \|\X^{-1}\|_2 \|\Z^{-1}\|_2 \fr{S_2^2}{S_r^3} \sig
  d^{3/2} \sqrt{rT} \rt) $, then the regret of ESTR is, with
  probability at least $1 - \delta - 2 / d_2^3 $, 
  \begin{align*}
    \tilO\lt( \fr{S_2^3}{S_r^3} \|\X^{-1}\|_2 \|\Z^{-1}\|_2 \sig d^{3/2}\sqrt{rT}  \rt) \ .
  \end{align*}
\end{cor}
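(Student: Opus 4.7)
The plan is to derive Corollary~\ref{cor:estr-regret} as a direct consequence of Theorem~\ref{thm:estr-regret} by optimizing the stage-1 exploration length $T_1$. The theorem gives a regret bound of the form
\begin{align*}
\tilO\!\left(a\, T_1 + \frac{b}{T_1}\right),
\end{align*}
where, using $s^*_1 \le S_2$, we can take $a = S_2$ and $b = T\cdot \|\X^{-1}\|_2^2 \|\Z^{-1}\|_2^2 (S_2^5/S_r^6)\sig^2 d^3 r$. So first I would read off these constants, absorbing logarithmic factors into $\tilO$.

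Second, I would balance the two terms by AM--GM, which achieves its minimum when $T_1 = \sqrt{b/a}$. Plugging in the expressions for $a$ and $b$ and simplifying the exponents on $S_2$ and $S_r$ yields
\begin{align*}
T_1 = \Theta\!\left(\|\X^{-1}\|_2 \|\Z^{-1}\|_2 \frac{S_2^2}{S_r^3} \sig\, d^{3/2}\sqrt{rT}\right),
\end{align*}
which matches the choice in the corollary. Substituting this $T_1$ back into each of the two terms gives the same value, $\tilO\!\left( \tfrac{S_2^3}{S_r^3} \|\X^{-1}\|_2 \|\Z^{-1}\|_2 \sig\, d^{3/2}\sqrt{rT}\right)$, which is the claimed regret.

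Third, I would verify that the chosen $T_1$ satisfies the hypothesis of Theorem~\ref{thm:estr-regret}, namely the lower bound $T_1 \ge C_0 \sig^2(\mu_0^2+\mu_1^2)\frac{\kappa^6}{\Sigma_{\min}(\K^*)^2} dr(r+\log d)$. Since the proposed $T_1$ grows like $\sqrt{T}$ times a polynomial in $d$ and $r$, this condition is satisfied for all sufficiently large $T$; otherwise the bound is vacuous and the regret is trivially $\tilO(T)$, which is still upper-bounded by the stated expression in that regime (or can be absorbed into lower-order terms). The failure probability $\delta + 2/d_2^3$ is inherited verbatim from Theorem~\ref{thm:estr-regret}.

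The only mildly delicate point, rather than a real obstacle, is tracking the powers of $S_2$ and $S_r$ carefully when taking the square root of $b/a$, since a miscount by one factor of $S_2/S_r$ would throw off the final bound; this is a routine bookkeeping step with no substantive difficulty.
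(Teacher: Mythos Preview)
Your proposal is correct and follows essentially the same approach as the paper: balance the two terms $s^*_1 T_1$ and $b/T_1$ from Theorem~\ref{thm:estr-regret} (with $s^*_1 \le S_2$) by choosing $T_1 = \Theta(\sqrt{b/S_2})$, then simplify. The paper's proof is terser and does not explicitly check the lower-bound condition on $T_1$, so your third step is a small bonus, but otherwise the arguments coincide.
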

Note that, for our problem, the incoherence constants $\mu_0$ and $\mu_1$ do not play an important role with large enough $T$.

\vspace{-.5em}
\paragraph{Remark}
One might notice that 
we can also regularize the submatrices $\M_{r+1:d_1, 1:r}$ and
$\M_{1:r,r+1:d_2}$ since they are coming partly from the complementary subspace of $\hU$
and partly from the complement of $\hV$ (but not both).  In practice,
such a regularization can be done to reduce the regret slightly, but
it does not affect the order of the regret.  We do not have sufficient
decrease in the magnitude to provide interesting bounds. One can show that, while $\|\M_{r+1:d_1,r+1:d_2}\|_F^2 = \cO(1/T_1)$, the quantities $\|\M_{1:r,r+1:d_2}\|_F^2$ and $\|\M_{r+1:d_1,1:r}\|_F^2$ are $\cO(1/\sqrt{T_1})$.


\section{Lower bound}
\label{sec:lowerbound}

A simple lower bound is $\Omega(d\sqrt{T})$, since when the arm set $\Z$ is a singleton the problem reduces to a $d_1$-dimensional linear bandit problem. We have attempted to extend existing lower-bound proof techniques in~\citet{rusmevichientong10linearly},~\citet{dani08stochastic}, and~\citet{lattimore18bandit}, but the bilinear nature of the problem introduces cross terms between the left and right arm, which are difficult to deal with in general.  However, we conjecture that the lower bound is $\Omega(d^{3/2}\sqrt{rT})$.  We provide an informal argument below that the dependence on $d$ must be $d^{3/2}$ based on the observation that the rank-one bilinear reward model's signal-to-noise ratio (SNR) is significantly worse than that of the linear reward model.

Consider a rank-one $\Th^*$ that can be decomposed as $\u \v^\T$ for some $\u,\v \in \{\pm 1/\sqrt{d}\}^d$.  Suppose the left and right arm sets are $\cX = \cZ = \{ \pm 1/\sqrt{d} \}^d$.  Let us choose $\x_t$ and $\z_t$ uniformly at random (which is the sort of pure exploration that must be performed initially).  Then a simple calculation shows that the expected squared signal strength with such a random choice is $\EE |\x_t^\T\Th^* \z_t|^2=\fr{1}{d^2}$.  In contrast, the expected squared signal strength for a linear reward model is $\EE |\x_t^\T \u|^2 = \fr{1}{{d}}$.  The effect of this is analogous to increasing the sub-Gaussian scale parameter of the noise $\eta_t$ by a factor of $\sqrt{d}$.  We thus conjecture that the $\sqrt{d}$ difference in the SNR introduces the dependence $d^{3/2}$ in the regret rather than $d$.

\section{Experiments}
\label{sec:expr}

\begin{figure}
\begin{center}
  \begin{tabular}{c}
    \includegraphics[width=.7\columnwidth,valign=t]{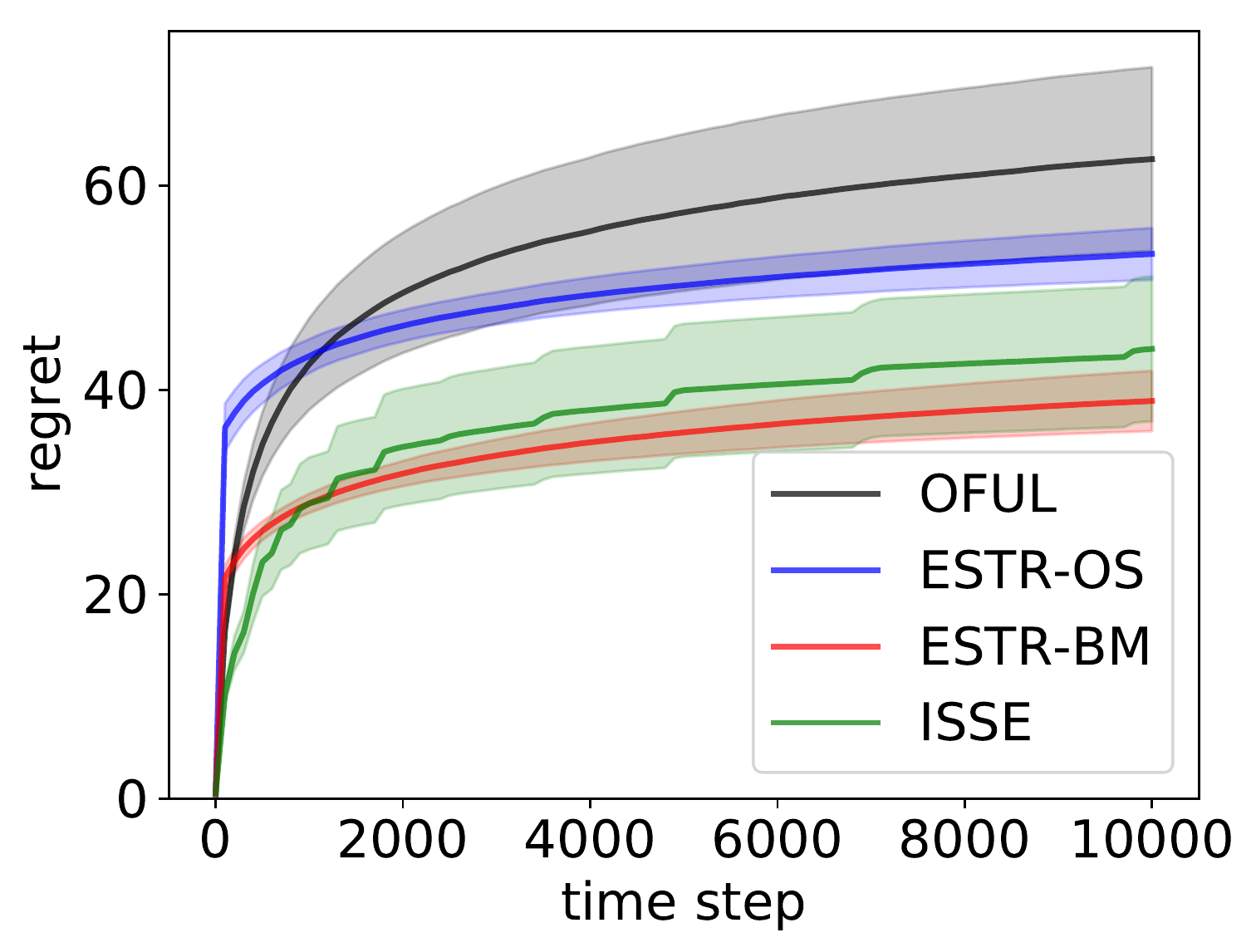} \\
  \end{tabular}
\end{center}
\vspace{-1em}
  \caption{
    Simulation results for $d=8$ and $r=1$.
    Our method ESTR-OS, its variant ESTR-BM, and an implicit exploration variant of ESTR called ISSE
    all outperform the baseline linear bandit method OFUL.
  }
  \label{fig:expr-toy}
\vspace{-1.5em}
\end{figure}
We present a preliminary experimental result and discuss practical concerns.

Bandits in practice requires tuning the exploration rate to perform well, which is usually done 
by adjusting the confidence bound width~\cite{chapelle11anempirical,li10acontextual,zhang16online}, which amounts to replacing $\beta_t$ with $c \beta_t$ for some $c>0$ for OFUL or its variants (including LowOFUL).
An efficient parameter tuning in bandits is an open problem and is beyond our scope.
For the sake of comparison, we tune $c$ by grid search and report the result with the smallest average regret.
For ESTR, the value of $T_1$ used in the proof involves some unknown constants; to account for this, we tune $T_1$ by grid search.
We consider the following methods:
\begin{itemize}
  \item OFUL: The OFUL reduction described in~\eqref{eq:oful-reduction}, which ignores the low-rank structure.
  \item ESTR-OS: Our proposed method; we simplify $B_\perp$ in~\eqref{eq:def-B-Bp} to $S_2 \sig^2 d^3 r / T_1$.
  \item ESTR-BM: We replace OptSpace with the Burer-Monteiro formulation and perform the alternating minimization~\cite{burer03anonlinear}.
  \item ISSE (Implicit SubSpace Exploration): LowOFUL with a heuristic subspace estimation that avoids an explicit exploration stage. 
     We split the time intervals with knots at $t \in \{10^{0.5}, 10^{1}, 10^{1.5}, ...\}$.
     At the beginning time $t'$ of each interval, we perform the matrix recovery with the Burer-Monteiro formulation using all the past data, estimate the subspaces, and use them to initialize LowOFUL with $B_\perp = S_2\sig^2 d^3 r / t'$ and all the past data.
\end{itemize}
Note that OFUL and ISSE only require tuning $c$ whereas ESTR methods require tuning both $c$ and $T_1$.

We run our simulation with $d_1=d_2=8$, $r=1$, $\sig=0.01$.
We set $\lambda=1$ for both OFUL and LowOFUL.
We draw 16 arms from the unit sphere for each arm set $\cX$ and $\cZ$ and simulate the bandit game for $T=10^4$ iterations, which we repeat 60 times for each method.
Figure~\ref{fig:expr-toy} plots the average regret of the methods and the .95 confidence intervals.
All the methods outperform OFUL, and the regret differences from OFUL are statistically significant.
We observe that ESTR-BM performs better than ESTR-OS.
We believe this is due to our limit on the number of iterations of OptSpace set to 1000, which we imposed due to its slow convergence in our experiments.\footnote{
  We used the authors' C implementation that is wrapped in python (\url{https://github.com/strin/pyOptSpace}).
}
The Burer-Monteiro formulation, however, converged within 200 iterations.
Finally, ISSE performs close to ESTR-BM, but with a larger variance.
Although ISSE does not have a theoretical guarantee, it does not require tuning $T_1$ and performs better than OFUL.

\section{Related work}
\label{sec:related}

There exist a few studies on pulling a pair of arms as a unit action,
as we do.
\citet{kveton17stochastic_arxiv} consider the $K$-armed
bandit with $N_1$ left arms and $N_2$ right arms.  The expected rewards
can be represented as a matrix $\bar \R \in \RR^{N_1 \times N_2}$
where the authors assume $\bar \R$ has rank $r \ll \min\{N_1, N_2\}$.
The main difference from our setting is that they do not assume that
the arm features are available, so our work is related
to~\citet{kveton17stochastic_arxiv} in the same way as the linear
bandits are related to $K$-armed bandits.  The problem considered in
\citet{katariya17stochastic} is essentially a rank-one version
of~\citet{kveton17stochastic_arxiv}, which is motivated by a click-feedback model called position-based
model with $N_1$ items and $N_2$ positions.  This work is further
extended to have a tighter KL-based bound
by~\citet{katariya17bernoulli}.  All these studies successfully
exploit the low-rank structure to enjoy regret bounds that scale with
$r(N_1 + N_2)$ rather than $N_1 N_2$. 
\citet{zimmert18factored} propose a more generic problem called factored bandits whose action set is a product of atomic $L$ action sets rather than two.
While they achieve generality by not require to know the explicit reward model, factored bandits do not leverage the known arm features nor the low-rank structure, resulting in large regret in our problem.

There are other works that exploit the low-rank structure of the
reward matrix, although the action is just a single arm pull.
\citet{sen17contextual} consider the contextual bandit setting where
there are $N_1$ discrete contexts and $N_2$ arms, but do not take into
account the observed features of contexts or arms.  Under the
so-called separability assumption, the authors make use of Hottopix
algorithm to exploit the low-rank structure.
\citet{gopalan16lowrank} consider a similar setting, but employ the robust tensor power method for recovery.
\citet{kawale15efficient} study essentially the same problem, but make
assumptions on the prior that generates the unknown matrix and
perform online matrix factorization with particle filtering to
leverage the low-rank structure. These studies also exploit the
low-rank structure successfully and enjoy regret bounds that scale
much better than $N_1 N_2$.

There has been a plethora of contextual bandit studies that exploit
structures other than the low-rank-ness, where the context is usually
the user identity or features.  For example, \citet{gentile14online}
and its followup studies~\cite{li16collaborative,gentile17oncontext}
leverage the clustering structure of the contexts.
In~\citet{cb13agang} and~\citet{vaswani17horde}, a graph structure of
the users is leveraged to enjoy regret bound that is lower than running
bandits on each context (i.e., user) independently.
\citet{deshmukh17multitask} introduce a multitask learning view and
exploit arm similarity information via kernels, but their regret
guarantee is valid only when the similarity is known ahead of time.
In this vein, if we think of the right arm set $\cZ$ as tasks, we
effectively assume different parameters for each task but with a
low-rank structure.  That is, the parameters can be written as a
linear combination of a few hidden factors, which are estimated
on the fly rather than being known in advance.
\citet{johnson16structured} consider low-rank structured bandits but in a different setup.
Their reward model has expected reward of the form $\tr(\X_t^\T \Th^*)$ with the arm $\X_t \in \RR^{d\times p}$ and the unknown $\Th^* \in \RR^{d \times p}$.
While $\X_t$ corresponds to $\x_t \z_t^\T$ in our setting, they consider a continuous arm set only, so their algorithm cannot be applied to our problem.


Our subroutine LowOFUL is quite similar to SpectralUCB
of~\cite{valko14spectral}, which is designed specifically for
graph-structured arms in which expected rewards of the two arms are
close to each other (i.e., ``smooth'') when there is an edge between
them.  Although technical ingredients for Corollary~\ref{cor:lowoful-regret} stem from~\citet{valko14spectral}, LowOFUL is for an inherently different setup in which we \emph{design} the regularization matrix $\Lam$ to maximally exploit the subspace
knowledge and minimize the regret, rather than receiving $\Lam$ from
the environment as a part of the problem definition.
\citet{gilton17sparse} study a similar regularizer in the context of
sparse linear bandits under the assumption that a superset of the
sparse locations is known ahead of time.  \citet{yue12hierarchical}
consider a setup similar to LowOFUL. They assume an estimate of the
subspace is available, but their regret bound still depends on the
total dimension $p$.

\section{Conclusion}
\label{sec:conclusion}

In this paper, we introduced the bilinear low-rank bandit problem and
proposed the first algorithm with a nontrivial regret guarantee. Our
study opens up several future research directions. First, there is
currently no nontrivial lower bound, and showing whether the regret of $\tilO(d^{3/2}\sqrt{rT})$ is
tight or not remains open.
Second, while our algorithm improves the regret bound over the trivial
linear bandit reduction, the algorithm requires to tune an extra parameter $T_1$.
It would be more natural to continuously update the subspace estimate and the amount of regularization, just like ISSE.
However, proving a theoretical guarantee would be challenging since most matrix recovery algorithms require some sort of uniform sampling with a ``nice'' set of measurements.  We speculate that one can employ randomized arm selection and use importance-weighted data to perform effective and provable matrix recoveries on-the-fly.


%
%


\section*{Acknowledgements}

This work was supported by NSF 1447449 - IIS, NIH 1 U54 AI117924-01, NSF CCF-0353079, NSF 1740707, and AFOSR FA9550-18-1-0166. 

\bibliography{library-shared}
\bibliographystyle{icml2019}

\clearpage
\onecolumn

\begin{center}
  {\Large\bf Supplementary Material}
\end{center}

\appendix

\section{Proof of Theorem~\ref{thm:optspace-ours}}
\label{sec:proof-keshavan}

\textbf{Theorem~\ref{thm:optspace-ours}} (Restated) \emph{ 
  There exists a constant $C_0$ such that for $T_1 \ge C_0 \sig^2(\mu_0^2
  + \mu_1^2) \fr{\kappa^6}{\Sigma_{\min}(\K^*)^2} dr(r + \log d)$, we
  have that, with probability at least $1 - 2/d_2^3$,
  \begin{align}
  \|\hK - \K^*\|_F  \le C_1 \kappa^2 \sig \fr{d^{3/2}\sqrt{r}}{\sqrt{T_1}}
  \end{align}
  where $C_1$ is an absolute constant.
}

\begin{proof}
There are a number of assumptions required for the guarantee of
OptSpace to hold. Given a noise matrix $\Z$, let $\tilK = \K^* + \Z$
be the noisy observation of matrix $\K^*$.  Among various noise models
in~\citet[Theorem~1.3]{keshavan10matrix}, the independent
$\rho$-sub-Gaussian model fits our problem setting well.  Let $E \in[m]\times[n]$ be the indicator
of observed entries and let $\Z^E$ be a censored version of $\Z$ in
which the unobserved entries are zeroed out.  Recall that we assume
$d_2 = \Theta(d_1)$, that $d = \max\{d_1, d_2\}$, and that $\kappa$ is
the condition number of $\K^*$.

We first state the guarantee and then describe the required technical
assumptions.  \citet[Theorem~1.2]{keshavan10matrix} states that the
following is true for some constant $C'>0$:
\begin{align*}
\|\hK - \K^*\|_F  \le C' \kappa^2 \fr{d^2\sqrt{r}}{|E|} \| \Z^E\|_2  \;.
\end{align*}
Here, by~\citet[Theorem 1.3]{keshavan10matrix}, $\|\Z^E\|_2$ is no
larger than $C'' \rho \sqrt{|E| / d}$, for some constant $C''>0$,
under Assumption~\textbf{(A3)} below, where $\rho$ is the sub-Gaussian
scale parameter for the noise $\Z$. ($\rho$ can be different from
$\sig$, as we explain below).  The original version of the statement has
a preprocessed version $\| \tilde \Z^E\|_2$ rather than $\| \Z^E\|_2$,
but they are the same under our noise model, according
to~\citet[Section 1.5]{keshavan10matrix}. Together, in our notation,
we have
\begin{align*}
\|\hK - \K^*\|_F  \le C' C'' \kappa^2 \rho \fr{d^{3/2}\sqrt{r}}{\sqrt{|E|}}.
\end{align*}
In the case of $T_1 < d_1 d_2$, the guarantee above holds true with
$\rho = \sig$ and $|E| = T_1$.  If $T_1 \ge d_1 d_2$, the guarantee holds
true with $\rho =
\sig\cdot\lt(\lt\lfloor\fr{T_1}{d_1d_2}\rt\rfloor\rt)^{-1/2} $ and $|E| =
d_1 d_2$.  In both cases, we arrive at~\eqref{eq:thm-optspace-ours}.

We now state the conditions.  Let $\alpha = d_1/d_2$.  Define
 ${\Sigma_{\min}}$ to be the smallest
nonzero singular values of $\M$.
\begin{itemize}
  \item (\textbf{A1}): $\M$ is $({\mu_0},{\mu_1})$-incoherent. Note
    $\mu_0 \in [1, \max\{d_1,d_2\}/r]$.
  \item (\textbf{A2}): (Sufficient observation) For some $C'>0$, we have
  \begin{align*}
  |E| \ge C' d_2\sqrt{\alpha} \kappa^2 \max \lt\{ \mu_0 r \sqrt{\alpha} \log d_2,\; \mu_0^2 r^2 \alpha \kappa^4,\; \mu_1^2 r^2 \alpha \kappa^4\rt\}\;,
  \end{align*}
  which we loosen and simplify to (using $\mu_0 \ge 1$)
  \begin{align*}
  |E| \ge C' \kappa^6(\mu_0^2 + \mu_1^2) dr(r + \log d).
  \end{align*}
  \item \textbf{(A3)}: $|E| \ge n \log n$. 
  \item \textbf{(A4)}: We combine the bound on $\|\Z^E\|_2$ and the
    condition in~\citet[Theorem 1.2]{keshavan10matrix} that says
    ``provided that the RHS is smaller than $\sigma_{\min}$'', which
    results in requiring
  \begin{align*}
  \fr{|E|}{\rho^2} \ge C'' \fr{\kappa^4}{\Sigma_{\min}^2} dr,
  \end{align*}
  for some $C''>0$, Using the same logic as before, either $T_1 <
  d_1d_2$ or $T_1 \ge d_1d_2$, we can rewrite the same statement in
  terms of $T_1$, as follows:
  \begin{align*}
  T_1 \ge C'' \sig^2 \fr{\kappa^4}{\Sigma_{\min}^2} dr \;.
  \end{align*}
\end{itemize}
All these conditions can be merged to 
\begin{align*}
T_1 \ge C_0 \sig^2 (\mu_0^2 + \mu_1^2) \fr{\kappa^6}{\Sigma_{\min}^2}
dr(r + \log d)
\end{align*}
for some constant $C_0>0$.
\end{proof}

\section{Proof of Theorem~\ref{thm:sintheta-ours}}

We first restate the  $\sin \Theta$ theorem due to
Wedin~\cite{stewart90matrix} in a slightly simplified form.
Let the SVDs of matrices $\A$ and $\tilde\A$ be defined as follows:
\begin{align*}
(\U_1\; \U_2\; \U_3)^\T \A (\V_1\; \V_2)  = \begin{pmatrix}
  \bfSigma_1 & \mathbf{0}
\\\mathbf{0} & \bfSigma_2
\\ \mathbf{0} & \mathbf{0}
\end{pmatrix},
\end{align*}
\begin{align*}
  (\tilde\U_1\; \tilde\U_2\; \tilde\U_3)^\T \tilde\A (\tilde\V_1\; \tilde\V_2)  = \begin{pmatrix}
  \tilde\bfSigma_1 & \mathbf{0}
  \\\mathbf{0} & \tilde\bfSigma_2
  \\ \mathbf{0} & \mathbf{0}
  \end{pmatrix}.
\end{align*}
Let $\R = \A\tilde \V_1 - \tilde \U_1 \tilde \bfSigma_1$ and $\S =
\A^\T \tilde \U_1 - \tilde\V_1\tilde\bfSigma_1$, and define
$\U_{1\perp} = [\U_2 \; \U_3]$ and $\V_{1\perp} = [\V_2\; \V_3]$.
Wedin's $\sin \Theta$ theorem, roughly speaking, bounds the sin
canonical angles between two matrices by the Frobenius norm of their
difference.
\begin{thm}[Wedin]
  Suppose that there is a number $\delta>0$ such that
  \begin{align*}
    \min_{i,j} \left| \sig_i \left( \tilde\bfSigma_1\right) - \sig_j\left(\bfSigma_2 \right) \right| \ge \delta
    \quad\text{ and }\quad
    \min_i  \sig_i \left(\tilde\bfSigma_1 \right) \ge \delta \;.
  \end{align*}
  Then,
  \begin{align*}
    \sqrt{\| \U_{1\perp}^\T \tilde \U_1 \|_F^2 + \| \V_{1\perp}^\T \tilde \V_1 \|_F^2} \le \fr{\sqrt{\|\R\|_F^2 + \|\S\|_F^2}}{\delta}
  \end{align*}
\end{thm}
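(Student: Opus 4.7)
The plan is to reduce Wedin's inequality to a coupled Sylvester-type system on the coupling matrices $\E = \U_{1\perp}^\T \tilde \U_1$ and $\F = \V_{1\perp}^\T \tilde \V_1$, then use the assumed singular-value gap to invert the system entry by entry. To set this up, I would first expand $\R$ and $\S$ using the SVD of $\A$. Writing $\A = \U_1 \bfSigma_1 \V_1^\T + \U_2 \bfSigma_2 \V_2^\T$ and partitioning $\E = \bigl(\begin{smallmatrix}\E_1\\ \E_2\end{smallmatrix}\bigr)$ conformally with $\U_{1\perp} = [\U_2\; \U_3]$, direct computation of $\U_{1\perp}^\T \R$ and $\V_{1\perp}^\T \S$ (where $\V_{1\perp} = \V_2$, the ``$\V_3$'' block in the statement being empty) yields the three identities
\begin{equation*}
\bfSigma_2 \F - \E_1 \tilde\bfSigma_1 = \U_2^\T \R, \qquad \bfSigma_2 \E_1 - \F\, \tilde\bfSigma_1 = \V_{1\perp}^\T \S, \qquad -\E_2\, \tilde\bfSigma_1 = \U_3^\T \R.
\end{equation*}

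Next I would diagonalize. Because the Frobenius norms on both sides are invariant under left/right multiplication by orthogonal matrices coming from the SVDs of $\bfSigma_2$ and $\tilde\bfSigma_1$, I may assume both are diagonal, $\tilde\bfSigma_1 = \mathrm{diag}(\tilde\sigma_j)$ and $\bfSigma_2 = \mathrm{diag}(\sigma'_i)$. Reading the first two identities entrywise, each pair $(E_{1,ij}, F_{ij})$ satisfies
\begin{equation*}
\begin{pmatrix} -\tilde\sigma_j & \sigma'_i \\ \sigma'_i & -\tilde\sigma_j \end{pmatrix}\begin{pmatrix} E_{1,ij} \\ F_{ij}\end{pmatrix} = \begin{pmatrix}(\U_2^\T\R)_{ij}\\ (\V_{1\perp}^\T\S)_{ij}\end{pmatrix}.
\end{equation*}
The $2\times 2$ matrix is symmetric with eigenvalues $\sigma'_i - \tilde\sigma_j$ and $-(\sigma'_i + \tilde\sigma_j)$. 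The two hypotheses $\min_{i,j}|\tilde\sigma_j - \sigma'_i| \ge \delta$ and $\min_j \tilde\sigma_j \ge \delta$ give $|\tilde\sigma_j + \sigma'_i| \ge \tilde\sigma_j \ge \delta$, so both eigenvalues have magnitude at least $\delta$, hence the inverse has spectral norm at most $1/\delta$. This produces the entrywise bound $E_{1,ij}^2 + F_{ij}^2 \le \delta^{-2}\bigl((\U_2^\T\R)_{ij}^2 + (\V_{1\perp}^\T\S)_{ij}^2\bigr)$.

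Summing over $i,j$ gives $\|\E_1\|_F^2 + \|\F\|_F^2 \le \delta^{-2}\bigl(\|\U_2^\T\R\|_F^2 + \|\V_{1\perp}^\T\S\|_F^2\bigr)$. The third identity combined with $\min_j \tilde\sigma_j \ge \delta$ yields $\|\E_2\|_F^2 \le \delta^{-2}\|\U_3^\T \R\|_F^2$. Adding these and using $\|\U_2^\T\R\|_F^2 + \|\U_3^\T\R\|_F^2 = \|\U_{1\perp}^\T\R\|_F^2 \le \|\R\|_F^2$ and $\|\V_{1\perp}^\T\S\|_F \le \|\S\|_F$ (orthogonal projections are contractive in the Frobenius norm) gives $\|\E\|_F^2 + \|\F\|_F^2 \le \delta^{-2}(\|\R\|_F^2 + \|\S\|_F^2)$, and taking square roots delivers the stated inequality.

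The main obstacle I anticipate is purely bookkeeping: correctly splitting $\E$ into an $\E_1$ block (that enters the coupled Sylvester system via $\bfSigma_2$) and an $\E_2$ block (which only couples to $\tilde\bfSigma_1$, since $\U_3$ lies outside the range of $\A$), and verifying that the diagonalization step incurs no loss. The real analytic content is the observation that the two hypothesized gaps together force every eigenvalue of the small $2\times 2$ matrix to exceed $\delta$; without the second hypothesis $\min_j \tilde\sigma_j \ge \delta$, the factor $|\tilde\sigma_j + \sigma'_i|$ could be arbitrarily small (when some $\sigma'_i \approx -\tilde\sigma_j$ in a signed sense) and the argument collapses.
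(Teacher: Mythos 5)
Your proof is correct and complete. Note, however, that the paper itself does not prove this statement at all: it imports Wedin's $\sin\Theta$ theorem verbatim from \citet{stewart90matrix} and only uses it as a black box to derive Theorem~\ref{thm:sintheta-ours}. What you have written is essentially the classical proof of Wedin's theorem: the three identities $\bfSigma_2 \F - \E_1\tilde\bfSigma_1 = \U_2^\T\R$, $\bfSigma_2\E_1 - \F\tilde\bfSigma_1 = \V_{1\perp}^\T\S$, and $-\E_2\tilde\bfSigma_1 = \U_3^\T\R$ check out (using $\U_2^\T\A = \bfSigma_2\V_2^\T$, $\V_2^\T\A^\T = \bfSigma_2\U_2^\T$, $\U_3^\T\A = \mathbf{0}$), the $2\times 2$ reduction is exact because the $\bfSigma$ blocks are already diagonal in an SVD, and the eigenvalue bound plus the contractivity of $\U_{1\perp}^\T$ and $\V_{1\perp}^\T$ deliver the claim. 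One small remark on your closing discussion: since singular values are nonnegative, $\sigma'_i + \tilde\sigma_j \ge |\sigma'_i - \tilde\sigma_j| \ge \delta$ already follows from the first hypothesis alone, so the gap condition by itself controls the coupled $2\times 2$ system; the second hypothesis $\min_j \tilde\sigma_j \ge \delta$ is genuinely needed only to invert $\tilde\bfSigma_1$ in the decoupled block $-\E_2\tilde\bfSigma_1 = \U_3^\T\R$ coming from the left null space of $\A$. This does not affect the validity of your argument, which uses the second hypothesis in a correct (if slightly redundant) way for the coupled block as well.
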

We now prove Theorem~\ref{thm:sintheta-ours}.

\textbf{Theorem~\ref{thm:sintheta-ours}} (Restated) \emph{
  Suppose we invoke OptSpace to compute $\hK$ as an estimate of the
  matrix $\K^*$.  After stage 1 of ESTR with $T_1$ satisfying the
  condition of Theorem~\ref{thm:optspace-ours}, we have, with probability at
  least $1-2/d_2^3$,
  \begin{align}
  \|\hUp^\T \U^* \|_F \|\hVp^\T \V^*\|_F \le \fr{\|\X^{-1}\|_2^2 \|\Z^{-1}\|_2^2 }{(s^*_r)^2}\tau^2
  \end{align}
  where $\tau = C_1 \kappa^2 \sig {d^{3/2}\sqrt{r}}/{\sqrt{T_1}}$.
}
\begin{proof}
  In our case, the variables defined for Wedin's theorem are as follows:
  \begin{align*}
  \A   &= \hTh           & \til \A  &= \Th^*      \\
  \U_1 &= \hU           & \til\U_1 &= \U^*    \\
  \bfSigma_1 &= \hbS & \til\bfSigma_1 &= \S^*\\
  \V_1 &= \hV           & \til\V_1 &= \V^*    \;.
  \end{align*}
  Let ${\E} = \hTh - \Th^*$.
  Note that 
  \begin{align*}
  \R &= \hTh \V^* - \U^*\S^* = (\Th^* + \E)\V^* - \U^*\S^* =  \E \hV
  \\\S &= -\E^\T\U^* \quad(\text{similarly}) \;.
  \end{align*}
  Then, $\|\R\|_F = \|\E \hV\|_F \le \|\E\|_F$, using the fact that
  \[
  \|\E\|_F = \|\E [\hV \; \hVp ]\|_F = \sqrt{\| \E\hV \|_F^2 + \| \E\hVp \|_F^2}.
  \]
  Similarly, $\|\S\|_F \le \|\E\|_F$.
  We now apply the  $\sin \Theta$ theorem to obtain
  \begin{align*}
  \sqrt{2 \|\hUp^\T \U^* \|_F \|\hVp^\T \V^*\|_F} 
  \le \sqrt{    \|\hUp^\T \U^* \|_F^2 + \|\hVp^\T \V^*\|_F^2} 
  \le \fr{\sqrt{\|\R\|_F^2 + \|\S\|_F^2}}{ \dt } \le \fr{\sqrt{2 \|\E\|_F^2}}{ s^*_r}
  \end{align*}
  where the first inequality follows from the Young's inequality.
  To summarize, we have
  \begin{align}\label{eq:wedin-frob}
  \|\hUp^\T \U^* \|_F \|\hVp^\T \V^*\|_F \le \fr{\|\Th^* - \hTh\|_F^2}{s_r^{*2}} \;.
  \end{align}
  Theorem~\ref{thm:optspace-ours} and the inequality $\|\hTh - \Th^*\|_F \le \|\X^{-1}\|_2 \|\Z^{-1}\|_2 \|\hK - \K^*\|_F$ conclude the proof.
\end{proof}

\section{Proof of Lemma~\ref{lem:lowoful-concentration}}
This lemma is a direct consequence of~\citet[Lemma 3]{valko14spectral}.
We just need to characterize the constant $C$ therein that upper bounds $\|\th^*\|_\Lam$.
The observation that
\begin{align*}
\|\th^*\|_\Lam
\le \sqrt{\lambda\|\th_{1:k}\|_2^2 + \lambda_\perp\|\th_{k+1:p} \|_2^2}
\le \sqrt{\lambda} S + \sqrt{\lambda_\perp} S_\perp 
\end{align*}
completes the proof.

\section{Proof of Theorem~\ref{thm:lowoful-regret}}

Let $r_t$ be the instantaneous pseudo-regret at time $t$: $r_t =
\la\th^*, \a^* \ra - \la \th^*, \a_t \ra$.  
The assumptions $\|\a_t\|_2\le 1$ and $||\th^*||_2\le B$ imply that
$r_t \le 2B$.
Using the fact that the OFUL ellipsoidal confidence set contains $\th^*$ w.p. $\ge
1-\dt$, one can show that $ r_t \le 2 \sqrt{\beta_t}
\|\x_t\|_{\V_{t-1}^{-1}}$ as shown in~\citet[Theorem 3]{ay11improved}.
Then, using the monotonicity of $\beta_t$,
\begin{align*}
r_t 
&\le \min\{ 2B, 2 \sqrt{\beta_{{t}}} \|\x_t\|_{\V_{t-1}^{-1}}\}
\le \min\{ 2B, 2 \sqrt{\beta_{{T}}} \|\x_t\|_{\V_{t-1}^{-1}}\}
\\&= 2\sqrt{\beta_T} \min\{ B/\sqrt{\beta_T},\|\x_t\|_{\V_{t-1}^{-1}}\}
\\\implies \sum_{t=1}^T r_t
&\stackrel{(a)}{\le}  2\sqrt{\beta_T} \sqrt{T\sum_{t=1}^T \min\{ B^2/\beta_T, \|\x_t\|^2_{\V_{t-1}^{-1}}\}}
\\&\stackrel{(b)}{\le}  2\sqrt{\beta_T} \sqrt{T \max\{2,B^2/\beta_T\} \sum_{t=1}^T   \log\lt(1 + \|\x_t\|^2_{\V_{t-1}^{-1}}\rt)}
\\&\stackrel{(c)}{\le} 2\sqrt{\max\{2,1/\lam\}} \sqrt{\beta_T} \sqrt{\log\fr{|\V_T|}{|\Lam|}}\sqrt{T}
\end{align*}
where $(a)$ is due to Cauchy-Schwarz, $(b)$ is due to $\min\{a,x\} \le \max\{2,a\} \log(1+x), \forall a,x>0$ (see~\citet[Lemma 3]{jun17scalable}), and $(c)$ is by $\sum_{t=1}^T \log(1 + \|\x_t\|^2_{\V_{t-1}^{-1}}) = \log\fr{|\V_T|}{|\Lam|}$ (see~\citet[Lemma 11]{ay11improved}) and $\beta_T \ge (\sqrt{\lam}B + \sqrt{\lamp}B_\perp)^2 \ge \lam B^2$.

\section{Proof of Corollary~\ref{cor:lowoful-regret}}

The lemma from~\citet{valko14spectral} characterizes how large $\log\fr{|\V_T|}{|\Lam|}$ can be.
\begin{lem}\cite[Lemma 5]{valko14spectral} \label{lem:logdet}
  For any $T$, let $\Lam = \diag([\lam_1,\ldots,\lam_p])$.
  \begin{align*}
  \log \fr{|\V_T|}{|\Lam|} \le \max  \sum_{i=1}^p \log\lt( 1 + \fr{t_i}{\lam_i}\rt)
  \end{align*}
  where the maximum is taken over all possible positive real numbers $t_1,\ldots,t_p$ such that $\sum_{i=1}^p t_i = T $.
\end{lem}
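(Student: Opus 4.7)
}
The plan is to reduce the determinant on the left-hand side to a product of diagonal entries via Hadamard's inequality, and then to identify those diagonal entries with a feasible choice in the right-hand maximization. First I would recall that $\V_T = \Lam + \A^\T\A$ where $\A\in\RR^{T\times p}$ is the design matrix whose $t$-th row is $\a_t^\T$. Factoring $\Lam^{1/2}$ out of $\V_T$ on both sides gives
\begin{align*}
\log\fr{|\V_T|}{|\Lam|} \;=\; \log\det\!\lt(\I + \Lam^{-1/2}\A^\T\A\,\Lam^{-1/2}\rt)\;=:\;\log\det(\I+\B),
\end{align*}
where $\B := \Lam^{-1/2}\A^\T\A\,\Lam^{-1/2}$ is positive semidefinite. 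Hence $\I+\B$ is positive definite, and by Hadamard's inequality for PD matrices $\det(\I+\B)\le \prod_{i=1}^p (1+B_{ii})$.

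Second, I would compute the diagonal entries explicitly:
\begin{align*}
B_{ii} \;=\; \fr{1}{\lam_i}\,(\A^\T\A)_{ii} \;=\; \fr{1}{\lam_i}\sum_{t=1}^T a_{t,i}^2 \;=\;\fr{t_i}{\lam_i},
\end{align*}
where I set $t_i := \sum_{t=1}^T a_{t,i}^2 \ge 0$. Combining the two displays yields
\begin{align*}
\log\fr{|\V_T|}{|\Lam|} \;\le\; \sum_{i=1}^p \log\!\lt(1+\fr{t_i}{\lam_i}\rt).
\end{align*}
Finally, the arm-norm bound $\|\a_t\|_2\le 1$ gives $\sum_{i=1}^p t_i = \sum_{t=1}^T \|\a_t\|_2^2 \le T$. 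Since each summand $\log(1+t_i/\lam_i)$ is monotone nondecreasing in $t_i$, enlarging the coordinates to a vector $t_i'\ge t_i$ with $\sum_i t_i' = T$ only increases the right-hand side. This feasible point is dominated by the maximum over $\{t\in\RR_{\ge 0}^p:\sum_i t_i = T\}$, which is exactly the quantity appearing in the statement.

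The argument is essentially a three-line calculation; the only non-routine step is recognizing that Hadamard's inequality is the right tool to convert a determinant over a data-dependent matrix into a separable product aligned with the diagonal regularizer $\Lam$. Nothing here is a true obstacle, but one sanity check worth performing is that the inequality is attained (up to constants) when the $\a_t$'s are aligned with coordinate axes, which confirms that no substantial slack is introduced beyond Hadamard's bound.
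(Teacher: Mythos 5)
Your proof is correct, and it is essentially the standard argument: the paper itself gives no proof of this lemma (it is imported verbatim as Lemma~5 of \citet{valko14spectral}), and the cited source proves it the same way you do, by bounding the determinant of the positive definite matrix by the product of its diagonal entries (Hadamard) and noting that the diagonal increments $t_i=\sum_t a_{t,i}^2$ satisfy $\sum_i t_i=\sum_t\|\a_t\|_2^2\le T$. The only cosmetic point is the final monotonicity step promoting $\sum_i t_i\le T$ to the constrained maximum with $\sum_i t_i=T$, which you handle correctly.
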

We specify a desirable value of $\lamp$ in the following lemma.
\begin{lem}\label{lem:logdet-2}
    If $\lamp = \fr{T}{k\log\lt(1+\fr{T}{\lam}\rt) }$, then 
    \begin{align*}
    \log\fr{|\V_T|}{|\Lam|} \le 2 k\log\lt(1+\fr{T}{\lam}\rt)
    \end{align*}
\end{lem}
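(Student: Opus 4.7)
My plan is to apply Lemma~\ref{lem:logdet} and then bound the two ``blocks'' of the resulting sum separately, since the $\lam_i$'s take only two values.

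First, by Lemma~\ref{lem:logdet}, it suffices to bound
\[
\max \sum_{i=1}^k \log\!\lt(1+\tfrac{t_i}{\lam}\rt) + \sum_{i=k+1}^p \log\!\lt(1 + \tfrac{t_i}{\lamp}\rt)
\]
subject to $t_i \ge 0$ and $\sum_{i=1}^p t_i = T$. Let $T_a = \sum_{i=1}^k t_i$ and $T_b = \sum_{i=k+1}^p t_i$, so $T_a + T_b = T$. The main idea is that for the first block the logarithmic bound is already favorable, while for the second block we will pick $\lamp$ large enough that the contribution is controlled linearly via $\log(1+x)\le x$.

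For the first block I would invoke concavity of $\log$: by Jensen's inequality,
\[
\sum_{i=1}^k \log\!\lt(1+\tfrac{t_i}{\lam}\rt) \le k\log\!\lt(1+\tfrac{T_a}{k\lam}\rt) \le k\log\!\lt(1+\tfrac{T}{\lam}\rt),
\]
where the last step uses $T_a \le T$ and $k\ge 1$. For the second block I would instead use the elementary bound $\log(1+x)\le x$ termwise, giving
\[
\sum_{i=k+1}^p \log\!\lt(1 + \tfrac{t_i}{\lamp}\rt) \le \sum_{i=k+1}^p \tfrac{t_i}{\lamp} = \tfrac{T_b}{\lamp} \le \tfrac{T}{\lamp}.
\]
Substituting the prescribed $\lamp = T/\big(k\log(1+T/\lam)\big)$ then yields exactly $T/\lamp = k\log(1+T/\lam)$.

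Adding the two pieces gives $\log(|\V_T|/|\Lam|) \le 2k\log(1+T/\lam)$, which is the claimed bound. There is no real obstacle here: the entire argument hinges on recognizing that one should apply Jensen in the ``informative'' directions (where regularization is small and we want to exploit the factor of $k$) and use $\log(1+x)\le x$ in the ``penalized'' directions (where $\lamp$ is engineered to cancel $T$). The calibration of $\lamp$ is precisely what makes the two bounds match up to the same order $k\log(1+T/\lam)$.
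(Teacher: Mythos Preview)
Your proof is correct and follows essentially the same approach as the paper: invoke Lemma~\ref{lem:logdet}, bound the first $k$ terms by $k\log(1+T/\lam)$, and bound the remaining terms via $\log(1+x)\le x$ to get $T/\lamp = k\log(1+T/\lam)$. The only cosmetic difference is that the paper bounds the first block by the trivial termwise inequality $t_i\le T$ rather than Jensen, but this does not change the structure of the argument.
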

\begin{proof}
  Inheriting the setup of Lemma~\ref{lem:logdet}, 
  \begin{align*}
  \log\fr{|\V_T|}{|\Lam|} 
  &\le \max \sum_{i=1}^p \log (1 + \fr{t_i}{\lam_i})
  \\&\le k \log( 1 + \fr{T}{\lam})  + \sum_{i=k+1}^{p} \log( 1 + \fr{t_i}{\lamp})
  \end{align*}
  Then,
  \[
  \sum_{i=k+1}^{p} \log( 1 + \fr{t_i}{\lamp}) 
  \le \sum_{i=k+1}^{p} \fr{t_i}{\lamp}
  \le \fr{T}{\lamp}
  = k \log(1 + T/\lam) \;.
  \]
\end{proof}
By Lemma~\ref{lem:logdet-2}, the regret bound is, ignoring constants,
\begin{align*}
\lt(\sig k \log(1 + T/\lam) + \sqrt{k \log(1+T/\lam)} \cdot (\sqrt{\lam}S + \sqrt{\lamp}S_\perp)\rt)\cdot \sqrt{T} 
&=\tilO\lt(\lt(\sig k + \sqrt{k} \cdot (\sqrt{\lambda} S + \sqrt{\lambda_\perp} S_\perp)  \rt) \sqrt{T}\rt)
\\&= \tilO( (\sig k + \sqrt{k\lambda}S + \sqrt{k} \cdot \sqrt{\fr{T}{k}} S_\perp ) \sqrt{T})
\\&= \tilO( (\sig k + \sqrt{k\lambda}S + \sqrt{T} S_\perp ) \sqrt{T})
\end{align*}

\section{Proof of Theorem~\ref{thm:estr-regret} and Corollary~\ref{cor:estr-regret}}

Let us define $r_t = \max_{\x\in\cX, \z\in\cZ} \x^\T \Th^* \z - \x_t^\T \Th^* \z_t$, the instantaneous regret at time $t$.
Using $\max_{\|x\|_2, \|z\|_2\le1}$ $ |\x^\T\Th^*\z| \le \|\Th^*\|_2$, we bound the cumulative regret incurred up to the stage 1 as $\sum_{t=1}^{T_1} r_t \le 2 S_2 T_1$.
In the second stage, by the choices of $S$ and $S_F$ of~\eqref{eq:def-B-Bp},
\begin{align*}
\sum_{t=T_1+1}^{T_2} r_t  
&= \tilO( (\sig k + \sqrt{k\lam}S + \sqrt{T_2} S_\perp)\sqrt{T_2})
\\&= \tilO\lt( \lt(\sig k + \sqrt{k\lam}S_F + \sqrt{T_2} \cdot \|\X^{-1}\|_2^2 \|\Z^{-1}\|_2^2 C_1^2 \fr{S_2^5}{S_r^6} \sig^2 d^3 r\cdot \fr{1}{T_1} \rt)\sqrt{T_2}\rt) 
\end{align*}
Then, the overall regret is, using $T_2 \le T$
\begin{align*}
   \sum_{t=1}^T r_t = \tilO\lt(  s^*_1 T_1 + T \cdot \|\X^{-1}\|_2^2 \|\Z^{-1}\|_2^2 \fr{S_2^5}{S_r^6} \sig^2 d^3 r\cdot \fr{1}{T_1} \rt)
\end{align*}
With the choice of $T_1 = \Theta\lt( \sqrt{T\|\X^{-1}\|_2^2 \|\Z^{-1}\|_2^2 \fr{S_2^4}{S_r^6} \sig^2 d^3 r} \rt) $, the regret is
\begin{align*}
\tilO\lt( \fr{S_2^3}{S_r^3} \|\X^{-1}\|_2 \|\Z^{-1}\|_2 \sig d^{3/2}\sqrt{rT}  \rt)
\end{align*}

\section{Heuristics for selecting arms in stage 1}
\label{sec:heuristics}

We describe heuristics for solving~\eqref{eq:stage1-optim} when the arm set is finite.

Let $\X \in \RR^{N_1 \times d_1}$ be a matrix that takes each arm $\x \in \cX$ as its row.
One way to develop algorithms for solving~\eqref{eq:stage1-optim} is to relax the cardinality constraint to a continuous one:
\begin{align*}
\begin{array}{rrclcl}
\displaystyle \min_{\bflam} & \multicolumn{3}{l}{ -t }  \\
\mbox{s.t.} & \X^\T \diag(\bflam)\X &\succeq& t \I \\
& \lam_i  &\ge& 0 \quad \forall i \in [N_1]\\
& \sum_{i=1}^{N_1} \lam_i &=& 1\\
\end{array}
\end{align*}
We then choose the top $d_1$ arms with the largest $\lam_i$.

We found that choosing the best among the solution above and 20 candidate subsets drawn uniformly at random (total 21 candidate subsets) returns reasonable solutions for our purpose.

%
%
%

\end{document}